
\documentclass[a4paper,fleqn]{cas-sc}

\usepackage{bm}
\usepackage{array} 
\usepackage{tabularx} 
\usepackage{hyperref}
\usepackage{amsthm}
\usepackage{enumitem}
\usepackage{amsfonts}
\usepackage{amssymb}
\usepackage{booktabs}
\usepackage{colortbl}
\usepackage{graphicx}
\usepackage{caption}
\usepackage{subcaption}
\usepackage{multicol}
\usepackage{rotating}
\usepackage{tikz-cd}
\usepackage{multirow}
\usepackage[ruled,vlined,linesnumbered]{algorithm2e}

\usepackage[authoryear]{natbib}

\def\tsc#1{\csdef{#1}{\textsc{\lowercase{#1}}\xspace}}
\tsc{WGM}
\tsc{QE}
\tsc{EP}
\tsc{PMS}
\tsc{BEC}
\tsc{DE}

\newtheorem{theorem}{Theorem}
\newtheorem{definition}{Definition}[section] 

\begin{document}

\let\WriteBookmarks\relax
\def\floatpagepagefraction{1}
\def\textpagefraction{.001}
\let\printorcid\relax
\shorttitle{}

\title [mode = title]{Toward Fair Graph Neural Networks Via Dual-Teacher Knowledge Distillation}                      

\author[1]{Chengyu Li}[style=chinese]
\author[2]{Debo Cheng}[style=chinese]
\ead{chengdb2016@gmail.com}
\cormark[1]
\author[3]{Guixian Zhang}[style=chinese]

\author[1]{Yi Li}[style=chinese]

\author[1]{Shichao Zhang}[style=chinese]

\address[1]{School of Computer Science and Engineering; Guangxi Key Lab of MSMI; MOE Key Lab of EBIT, Guangxi Normal University, Guilin, 541004, Guangxi, China}

\address[2]{{School of Computer Science and Technology},
            {Hainan University}, 
            {Haikou},
            {Hainan},
            {570228},
            {China }}
\address[3]{School of Computer Science and Technology, China University of Mining and Technology, Xuzhou, Jiangsu, 221116, China}

\cortext[1]{Corresponding author}

\begin{abstract}
Graph Neural Networks (GNNs) have demonstrated strong performance in graph representation learning across various real-world applications. However, they often produce biased predictions caused by sensitive attributes, such as religion or gender, an issue that has been largely overlooked in existing methods. Recently, numerous studies have focused on reducing biases in GNNs. However, these approaches often rely on training with partial data (e.g., using either node features or graph structure alone), which can enhance fairness but frequently compromises model utility due to the limited utilization of available graph information.
To address this trade-off, we propose an effective strategy to balance fairness and utility in knowledge distillation. Specifically, we introduce FairDTD, a novel \underline{Fair} representation learning framework built on \underline{D}ual-\underline{T}eacher \underline{D}istillation, leveraging a causal graph model to guide and optimize the design of the distillation process. In particular, FairDTD employs two fairness-oriented teacher models: a feature teacher and a structure teacher, to facilitate dual distillation, with the student model learning fairness knowledge from the teachers while also leveraging full data to mitigate utility loss. To enhance information transfer, we incorporate graph-level distillation to provide an indirect supplement of graph information during training, as well as a node-specific temperature module to improve the comprehensive transfer of fair knowledge.
Experiments on diverse benchmark datasets demonstrate that FairDTD achieves optimal fairness while preserving high model utility, showcasing its effectiveness in fair representation learning for GNNs.
\end{abstract}


\begin{keywords}
Fairness \sep 
Graph neural networks \sep 
Causality  \sep 
Knowledge distillation \sep
Representation learning
\end{keywords}

\maketitle

\section{Introduction}
Graph Neural Networks (GNNs) have garnered widespread attention in recent years due to their exceptional ability to model non-Euclidean structured data, such as graphs, achieving remarkable success across various domains. For instance, in drug discovery, GNNs effectively capture the complex interactions between atoms and chemical bonds within molecular graphs, thereby improving the accuracy of molecular property prediction and drug screening \citep{dai2024comprehensive}. In recommendation systems, GNNs model the interaction graph between users and items to uncover latent collaborative filtering patterns, significantly enhancing recommendation performance \citep{ma2024cross}. Unlike traditional machine learning models, the key advantage of GNNs lies in their structure-aware capability: through iterative neighborhood feature aggregation, GNNs enable context-aware node representations, making them highly expressive and adaptable in realworld tasks where graph-structured data is prevalent.

However, due to the neighborhood aggregation mechanism inherent in GNNs, the model may inherit and even amplify underlying social biases present in graph data, leading to discriminatory predictions influenced by sensitive attributes such as gender and race \citep{zou2025graph, li2024contrastive}. In applications involving high-risk decision-making, such as fraud detection \citep{li2022devil} and credit scoring \cite{yeh2009comparisons}, these biased predictions can exacerbate social inequities \citep{mehrabi2021survey}. Additionally, GNNs model on non-independent and identically distributed graph-structured data, while most traditional fair machine learning methods rely on the independent and identically distributed assumption. This makes it difficult for them to be directly applied to graph neural networks. Therefore, achieving fairness within the GNNs framework is of higher complexity and challenge. To address this, researchers have proposed fair graph representation learning. Existing research \citep{zhang2024disentangled, li2024quantum} attributes bias in GNNs to both node features and graph structure. Through feature propagation, initially unbiased node features can become correlated with sensitive attributes, leading to unintended leakage of sensitive information in node representations \citep{wang2022improving}. Additionally, nodes sharing similar sensitive attributes are frequently more interconnected in graph-structured data \citep{la2010randomization}, causing similarity in representations that can result in prediction bias.

Recent studies have focused on reducing bias in GNNs by enhancing model fairness while preserving utility. A common strategy is to modify the training data to preemptively mitigate bias. For instance, this can involve increasing the representation of disadvantaged groups \citep{current2022fairegm} or updating the adjacency matrix with fairness-based constraints \citep{li2021dyadic}. Such preprocessing steps aim to reduce inherited bias during training. Another approach introduces fairness constraints directly into the GNN’s objective function. For example, FairGAT \citep{kose2024fairgat} employs a bias-mitigating attention mechanism, while FairGNN \citep{dai2021say} employs adversarial learning to generate node representations independent of sensitive attributes. However, these methods face limitations: (1) bias exists in both node features and graph structure, which requires complex model design to handle both sources simultaneously; (2) methods that eliminate sensitive attributes are often highly specific, reducing their generalizability to diverse applications. Note that there is a critical question: can we develop a simpler yet effective approach to improve fairness by addressing biases in both node features and graph structure directly within the training data?

Inspired by reducing the sources of data bias \citep{zhang2022hyper,zhang2003data,bai2021understanding, zhang2021challenges}, it has been confirmed that simply training with partial data (i.e., using only node features or only graph structure) can significantly improve fairness and is readily extensible to other models \citep{zhu2024devil}. However, as we analyzed in Section \ref{analysis}, while partial data training substantially enhances fairness, it often comes at the cost of reduced model utility.

Therefore, the key challenge in training with partial data  lies in effectively balancing fairness and utility. Existing studies have explored different approaches to achieve a better trade-off. For instance, FUGNN \citep{luo2024fugnn}, based on spectral graph theory, addresses this balance by optimizing the feature vector distribution and truncating the spectrum. Similarly, FairSAD \citep{zhu2024fair} balances fairness and utility by disentangling sensitive attributes and employing a channel masking mechanism to separate sensitive information while preserving task-relevant information. However, these methods are primarily designed for training on complete data and fail to address the trade-off between fairness and utility when working with partial data.

In this context, knowledge distillation \citep{hinton2015distilling} offers a novel and promising solution for balancing fairness and utility by passing knowledge from a teacher model to a student model. Building on this idea, we propose a strategy where a student model extracts fairness knowledge from a teacher model trained on partial data while also learning from complete data to maintain utility. However, this strategy faces two challenges: (1) Utility loss in the teacher model: although teacher models trained on partial data can enhance fairness, their utility often declines due to limited data completeness, which may in turn limit its guiding role for the student model; (2) Limited knowledge transfer capability: although teacher models trained on partial data may acquire fairness-related knowledge, it remains unclear whether this knowledge can be transferred to the student model in a stable and effective manner. Moreover, most existing knowledge distillation methods employ a fixed temperature parameter, which fails to account for the varying complexity of prediction across different samples and may lead to imbalanced guidance during the distillation process \citep{wei2024dynamic, li2023curriculum}.

To tackle these challenges, we introduce FairDTD, a novel \underline{Fair} representation learning framework built on \underline{D}ual-\underline{T}eacher \underline{D}istillation. Drawing on a causal modeling perspective, we first conduct a systematic theoretical analysis of fair representation learning from partial data training. To address utility loss, FairDTD introduces two fairness-aware teacher models: a feature teacher and a structure teacher. It enables the student model to learn fairness-aware knowledge from these complementary sources through a dual distillation mechanism. Graph distillation is integrated to compensate for missing structural information during training. To tackle limited knowledge transfer, we design a node-specific temperature module that dynamically adjusts the distillation temperature based on prediction confidence, improving both precision and adaptability in knowledge transfer.

Our contributions can be summarized as follows:
\begin{itemize}
[leftmargin=*,labelindent=1.5em,itemsep=-0.1em]
\item We introduce a causal graph model and conduct both theoretical and empirical analyses of fair representation learning under partial data training.
\item We propose a novel dual-teacher distillation approach (FairDTD) combined with graph distillation and an adaptive temperature mechanism for more effective and fairness-aware knowledge transfer.
\item Experiments on multiple real-world datasets show that FairDTD achieves a better balance between fairness and utility, significantly enhancing fairness without compromising model performance.
\end{itemize}

\section{Related work}
In this section, we discuss relevant studies pertinent to our presented FairDTD framework, focusing on developments in GNNs and methods aimed at promoting fairness within GNNs.

\subsection{Graph neural networks}
In recent years, GNNs have attracted considerable attention owing to their exceptional performance in handling graph-structured data \cite{zhang2024bayesian,deng2023tts,zhang2024mitigating}. To be used in a variety of downstream tasks, including node classification \citep{wang2024heterogeneous}, graph classification \citep{sui2022causal}, and link prediction \citep{baghershahi2023self}, GNNs aim to generate embedding vectors that efficiently integrate both graph structure and node features.

Existing GNN methods primarily focus on spatial-based approaches, which perform message passing directly on the graph’s adjacency structure, aggregating information from neighboring nodes layer by layer. For instance, Graph Convolutional Network (GCN) \citep{kipf2016semi} uses a convolutional operation to aggregate the features of neighboring nodes to update node representations. Graph Isomorphism Network (GIN) \citep{xu2018powerful} introduces a straightforward yet potent message-passing mechanism to ensure that different graph structures yield distinct node representations.

The impressive performance of GNNs has facilitated their extensive adoption in a variety of real-world applications, including key decision-making \citep{fu2025himul} and anomaly detection in online banking transactions \citep{yuan2022explainability}. For instance, HGNRec \citep{li2024homogeneous} employs homogeneous GNNs for third-party library recommendations, LA-MGFM \citep{zhao2023mgfm} uses graph neural networks with multi-graph fusion to predict legal judgments, and MLAGs \citep{li2024multi} leverages multi-view graph convolutional networks to predict loan default risk. Additionally, a category of methods based on GNNs models local statistical channels across multiple geographic grids by integrating heterogeneous Markov graphs with enhanced variational Bayesian inference, achieving efficient, accurate, and highly interpretable wireless channel angle power spectrum estimation \cite{tr2021synchronization, cao2024input}.

However, the success of GNNs has prompted concerns about their potential to propagate and amplify biases in graph-structured data. Studies \citep{dai2021say, kose2021fairness} have highlighted that GNNs may unintentionally produce biased outcomes against certain demographic groups, underscoring the need for designing fairer GNN models for graph-based tasks.

\subsection{Fairness in graph neural networks}
When processing graph-structured data, GNNs are susceptible to biases resulting from data imbalances, just like traditional machine learning models, which might produce discriminatory outcomes \citep{chouldechova2018frontiers}. This has prompted growing interest in fairness within GNNs, often categorized into group fairness \citep{zhang2023fpgnn}, counterfactual fairness \citep{agarwal2021towards}, and individual fairness \citep{dong2021individual}. In this work, we concentrate on group fairness, aiming to ensure that predictions are unbiased across demographic groups defined by sensitive attributes.

The two main methods to improve group fairness in GNNs are pre-processing methods and in-training methods. Pre-processing methods aim to mitigate data bias before training. For example, EDITS \citep{dong2022edits} reduces distributional discrepancies between demographic groups by modifying graph structure and node attributes. FairDrop \citep{spinelli2021fairdrop} applies a fair edge dropout strategy to address structural bias in graphs. SRGNN \citep{zhang2024learning} balances graph structure bias by reducing neighbors of high-degree nodes and increasing connections for low-degree nodes. In-training methods modify the GNN objective function to promote fairness during representation learning. For example, FairVGNN \citep{wang2022improving} generates fair representations through adversarial debiasing. FairSIN \citep{yang2024fairsin} incorporates fairness-promoting features before message passing to enrich final node representations. FairINV \citep{zhu2024one} reduces spurious correlations between sensitive attributes and labels to produce fairer GNN predictions. FG-SMOTE \cite{wang2025fg} introduces a desensitized node representation mechanism and a fairness-aware link predictor, systematically addressing fairness concerns in minority oversampling caused by subgroup under-representation and structural bias. Themis \cite{wang2025towards} leverages a Bayesian variational autoencoder to infer proxy sensitive attributes and incorporates disentangled latent variables alongside a fairness normalization module, enabling fair graph representation learning without requiring on explicit demographic information.

Different from the methods reviewed above, our proposed FairDTD effectively reduces bias by training teacher models on partial data. By integrating this with knowledge distillation, FairDTD alleviates the fairness–utility trade-off commonly encountered in GNNs. This strategy of integrating “partial data training” and “knowledge distillation” provides a new perspective for enhancing the fairness of GNNs, especially demonstrating innovation in directions not deeply explored by previous work. Through the dual-teacher design and dynamic distillation mechanism, FairDTD achieves an effective balance between fairness enhancement and model performance maintenance, further expanding the research paradigm  for fair GNN learning.

Moreover, fair graph representation learning has significant potential in various real-world applications, particularly those involving graph-structured data and requiring fairness-aware automated decision-making. For example, in the financial services sector, including applications, such as credit scoring and fraud detection, fair graph representations help mitigate potential discrimination against sensitive demographic groups (e.g., by gender or race), thereby enhancing fairness, transparency, and interpretability. Similarly, in tasks highly dependent on graph-structured modeling, such as public policy-making, intelligent recruitment systems, and personalized recommendation, incorporating fairness mechanisms helps improve the credibility, reliability, and social sustainability of artificial intelligence systems \cite{zhang2024trustworthy, xia2020part}.

\section{Preliminaries}
In this section, we define the pertinent notations and provide an overview of knowledge distillation. Subsequently, we perform a causal analysis of our research problem using a causal graph and formally define the problem addressed in our work, which serves as the foundation for our FairDTD framework.

\subsection{Notations and problem formulation}
We consider an undirected graph $\bm{\mathcal{G}} = (\bm{\mathcal{V}}, \bm{\mathcal{E}}, \bm{\mathrm{X}})$, where $\bm{\mathcal{V}}$ represents the set of nodes and $\bm{\mathcal{E}}$ represents the set of edges. $|\bm{\mathcal{V}}| = N$ represents the number of nodes. $\bm{\mathrm{X}} \in \mathbb{R}^{N \times F}$ represents the node feature matrix, where $F$ is the dimension of the node feature. $\bm{\mathrm{A}} \in \{0,1\}^{N \times N}$ represents the adjacency matrix, where $\bm{\mathrm{A}}_{ij} = 1$ if and only if $(v_i, v_j) \in \bm{\mathcal{E}}$. Additionally, $S \in \{0,1\}^{N}$ contains the binary-sensitive attribute of each node. 

The majority of existing GNNs operate based on the message-passing method, which aggregates messages from neighboring nodes to update node representations iteratively. Formally, the $l$-th layer of a GNN is defined as:
\begin{equation}
m_v^{(l)}=\operatorname{AGG}^{(l)}\left(\left\{h_u^{(l-1)}: u \in \mathcal{N}(v)\right\}\right),
\end{equation}
\vspace{-1em}
\begin{equation}
h_v^{(l)}=\operatorname{UPD}^{(l)}\left(h_v^{(l-1)}, m_v^{(l)}\right),
\end{equation}
where $l$ is the layer index, $\operatorname{AGG}^{(l)}(\cdot)$ denotes the aggregation function, and $\operatorname{UPD}^{(l)}(\cdot)$ denotes the update function at the $l$-th layer. Here, $\mathcal{N}(v)$ denotes the collection of nodes adjacent to node $v$. The node representation at the final layer is $z_v = h_v^{(L)}$, where $L$ is the network's total number of layers.

In this paper, we aim to address the challenges associated with training on partial data. To this end, we formally define the problem as follows:

{\raggedright\textbf{Problem formulation.}}\hspace{1em}\textit{Given a graph $\bm{\mathcal{G}} = (\bm{\mathcal{V}}, \bm{\mathcal{E}}, \bm{\mathrm{X}})$ and a GNN-based teacher-student framework, which consists of a trained structure teacher $f_{str}$, a trained feature teacher $f_{fea}$, and a student model $f_{stu}$ to be trained, our objective is to maintain the overall utility of the model while developing a fairer student model under the guidance of the two fairness teachers.}

\subsection{Knowledge distillation}
Knowledge distillation seeks to transfer knowledge from a cumbersome, complicated teacher model to a more lightweight student model, enabling the student to achieve performance comparable to the teacher. In this work, we emphasize the transfer of fair and rich knowledge during the distillation process, rather than solely focusing on model compression.

To facilitate knowledge transfer, the knowledge distillation approach was designed by Hinton et al.~\citep{hinton2015distilling} to align the softened outputs generated by both the teacher and student models. Two supervisory signals are utilized in the training of the student model: (1) Hard Labels: The true labels from the training dataset. (2) Soft Labels: The label predictions of the teacher model. Formally, let $\bm{\mathrm{Z}}_{tea}$ represent the output logits from the teacher model, and $\bm{\mathrm{Z}}_{stu}$ denote the output logits from the student model, the loss to be minimized is defined as:
\begin{equation}
\mathcal{L}=\sum_{v \in \mathcal{V}}\mathcal{L}_{CE}\left(y_{v}, 
 (\operatorname{softmax}(z_v^{stu}))\right) + \alpha \mathcal{L}_{KD},
\end{equation}
where $\mathcal{L}_{CE}$ denotes the cross-entropy loss and the hyperparameter $\alpha$ regulates the relative contribution of the two loss functions. The knowledge distillation loss, $\mathcal{L}_{KD}$, is given by:
\begin{equation}
\mathcal{L}_{KD}=\mathcal{H}_{K L}\left(\operatorname{softmax}(\frac{\bm{\mathrm{Z}}_{stu}}{\tau})  \|  
\operatorname{softmax}(\frac{\bm{\mathrm{Z}}_{tea}}{\tau})\right),
\end{equation}
where $\mathcal{H}_{K L}$ is the KL-divergence, and $\tau$ is a temperature hyperparameter that regulates the smoothness of the two logit distributions by appropriately scaling them. A higher temperature makes the distribution flatter, generating softer predictions; a lower temperature enlarges the difference between the two distributions, generating harder predictions.

\subsection{Analysis of partial data training}\label{analysis}
In this subsection, we present the motivation behind partial data training from both theoretical and experimental perspectives. Recent studies have demonstrated that incorporating causal learning techniques into GNNs can more effectively address trustworthiness concerns by capturing the underlying causal relationships within the data~\citep{lin2024learning, fan2022debiasing}. To this end, we employ a causal graph~\citep{pearl2009causality, ling2024fair} to represent the causal relationships and analyze the sources of bias, illustrating how the sensitive attribute influences predictions through both node features and graph structure. Guided by this analysis, we further investigate the task of partial data training experimentally. While fairness can be improved, it often comes at the expense of reduced model utility.
\begin{figure}[ht]
    \centering
    \includegraphics[width=0.75\linewidth]{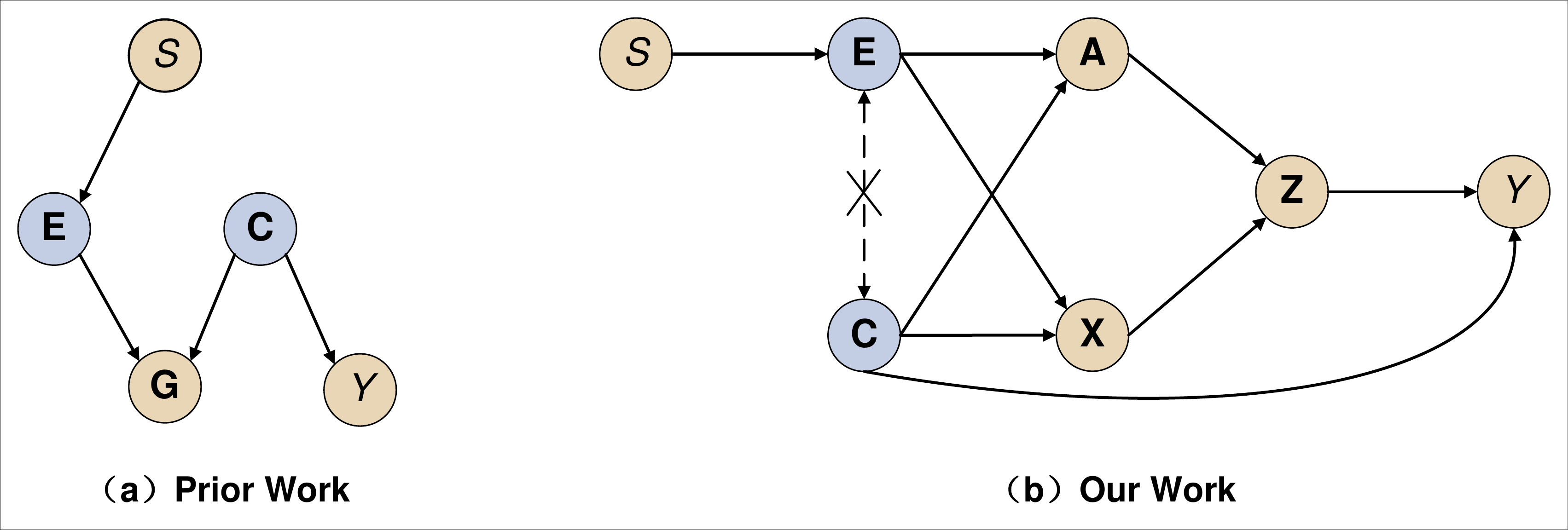} 
    \caption{\textbf{Two causal graphs are used to illustrate the comparison between prior work (a) and our proposed method (b). The causal graphs illustrate the predictions made by GNNs. Observed variables are depicted in brown, while unobserved variables are shown in blue. Solid edges represent direct causal relationships, whereas a dashed double arrow with a cross between \bm{\mathrm{E}} and \bm{\mathrm{C}} denotes statistical independence.}}
    \label{SCM}
\end{figure}

We first illustrate the causal relationships among seven distinct variables: observed sensitive attribute $S$, unobserved environmental feature \bm{\mathrm{E}}, unobserved content feature \bm{\mathrm{C}}, observed graph structure \bm{\mathrm{A}}, observed node feature \bm{\mathrm{X}}, node embedding \bm{\mathrm{Z}}, and true label $Y$. Specifically, $S$ $\rightarrow$ \bm{\mathrm{E}} indicates that the latent environmental feature \bm{\mathrm{E}} is decided by the sensitive attribute $S$. For instance, if $S$ represents gender, different genders may have distinct physical traits represented by \bm{\mathrm{E}}. The relationship \bm{\mathrm{C}} $\tikz[baseline] \draw[dashed, <->, >=to] (0,0.1) -- (0.35,0.1);$ \bm{\mathrm{E}} with a cross indicates that any spurious correlation between \bm{\mathrm{C}} and \bm{\mathrm{E}}  should be disentangled. Ideally,  \bm{\mathrm{C}} should represent content features that are independent of  \bm{\mathrm{E}}, ensuring that  \bm{\mathrm{C}} captures only information unrelated to the environmental features represented by  \bm{\mathrm{E}}. The relationships \bm{\mathrm{C}} $\rightarrow$ \bm{\mathrm{X}} $\leftarrow$ \bm{\mathrm{E}} and \bm{\mathrm{C}} $\rightarrow$ \bm{\mathrm{A}} $\leftarrow$ \bm{\mathrm{E}} demonstrate that the environmental feature \bm{\mathrm{E}} and content feature \bm{\mathrm{C}} jointly shape the observed contextual subgraph's node feature \bm{\mathrm{X}} and graph structure \bm{\mathrm{A}}. Furthermore, \bm{\mathrm{X}} $\rightarrow$ \bm{\mathrm{Z}} $\leftarrow$ \bm{\mathrm{A}} and \bm{\mathrm{Z}} $\rightarrow$ $Y$ denote that GNNs generate embeddings \bm{\mathrm{Z}} and predictions $Y$ grounded in the observed contextual subgraph.

Compared with conventional methods, our approach introduces key advancements at the causal modeling level \citep{cheng2024data, yao2021instance}. Existing fair graph representation learning techniques often fail to distinguish the different paths through which sensitive attributes propagate via node features and graph structures. This results in a mixture of bias sources and makes it difficult to achieve targeted intervention. As shown in Fig. \ref{SCM}(a), traditional methods typically depict the sensitive attribute $S$ influencing the subgraph \bm{\mathrm{G}} through a latent variable \bm{\mathrm{E}}, which in turn affects the prediction $Y$, however, these causal paths are neither explicitly modeled or systematically blocked. In contrast, as shown in Fig. \ref{SCM}(b),  our method leverages causal graphs to disentangle the influence of sensitive attributes into two distinct paths: the feature path ($S$ $\rightarrow$ \bm{\mathrm{E}} $\rightarrow$ \bm{\mathrm{X}} $\rightarrow$ \bm{\mathrm{Z}} $\rightarrow$ $Y$) and the structure path ($S$ $\rightarrow$ \bm{\mathrm{E}} $\rightarrow$ \bm{\mathrm{A}} $\rightarrow$ \bm{\mathrm{Z}} $\rightarrow$ $Y$). We also introduce intermediate variables, such as  content variable \bm{\mathrm{C}} and node embedding \bm{\mathrm{Z}}, to more accurately model representation learning and prediction mechanisms. Overall, our FairDTD approach provides a novel and principled perspective for advancing fairness in GNNs.

{\raggedright\textbf{Theoretical Analysis.}}\hspace{1em}As illustrated in Fig. \ref{SCM}, our constructed causal graph reveals two critical causal paths through which the sensitive attribute $S$ can influence the prediction $Y$: $S$ $\rightarrow$ \bm{\mathrm{E}} $\rightarrow$ \bm{\mathrm{X}} $\rightarrow$ \bm{\mathrm{Z}} $\rightarrow$ $Y$ and $S$ $\rightarrow$ \bm{\mathrm{E}} $\rightarrow$ \bm{\mathrm{A}} $\rightarrow$ \bm{\mathrm{Z}} $\rightarrow$ $Y$. These two paths constitute causal channels through which sensitive information may introduce bias into model predictions. To achieve this, it is essential to identify and block the critical transmission points along these paths that carry sensitive information. To formalize this, we introduce the following definition:

\renewcommand{\thedefinition}{\arabic{definition} (Path-Specific Blocking \citep{zhang2024path})}

\begin{definition}\label{dingyi1}
Given a causal path p, if we intervene on intermediate variables along this path (such as node features \bm{\mathrm{X}} or graph structure \bm{\mathrm{A}}) in a way that prevents the sensitive attribute from influencing the prediction outcome $Y$ through $p$, then the path is said to be specifically blocked.

\end{definition}

Building on this definition, we propose a path-decoupled strategy, which trains two models that each rely exclusively on a single type of pathway variable (either node features or graph structure) to achieve path-specific blocking and thereby enhance model fairness.

\begin{figure}[htbp] 
    \centering
    \begin{minipage}{0.49\linewidth}
        \centering
        \includegraphics[width=1\linewidth]{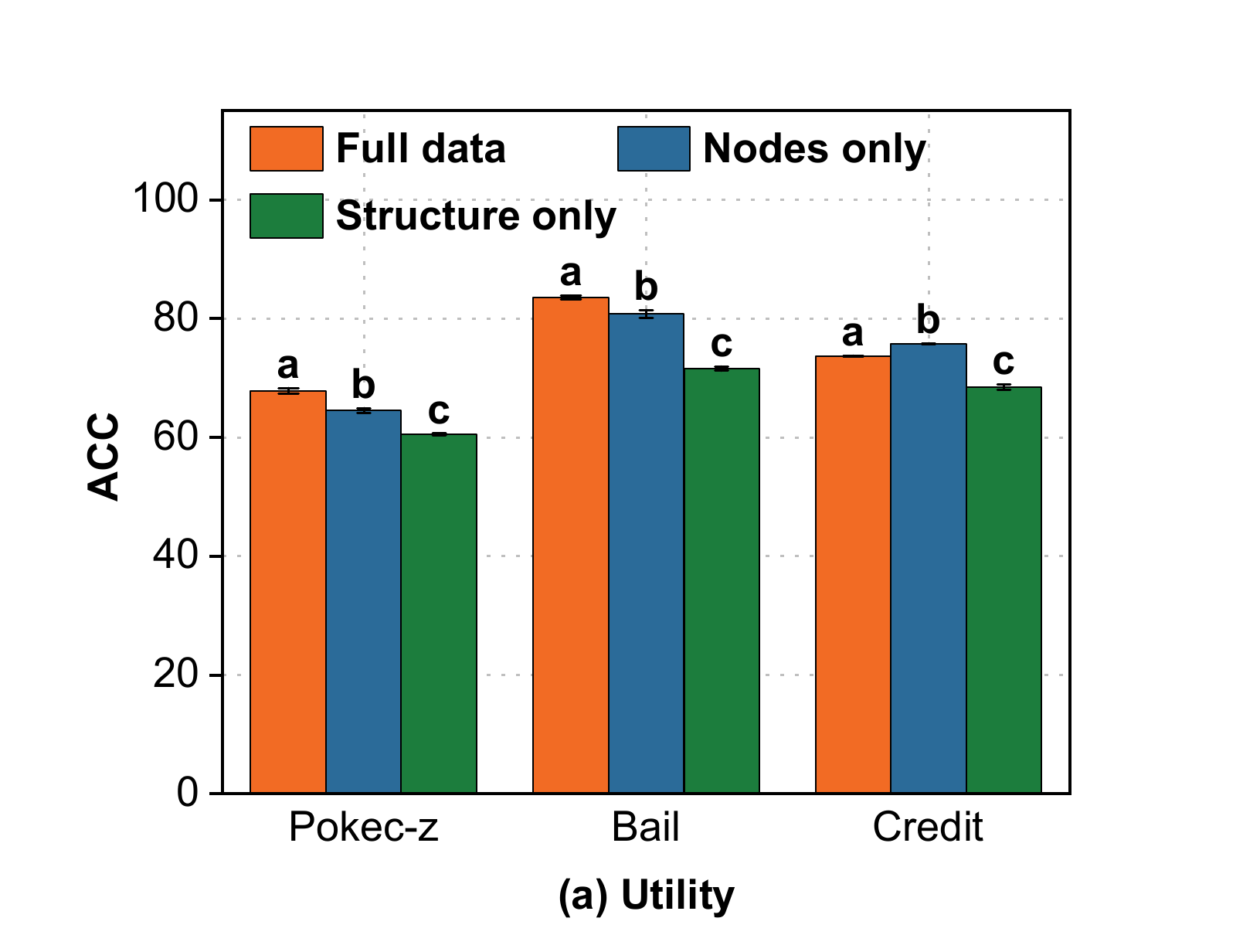} 
        
    \end{minipage}
    \hfill
    \begin{minipage}{0.49\linewidth}
        \centering
        \includegraphics[width=1\linewidth]{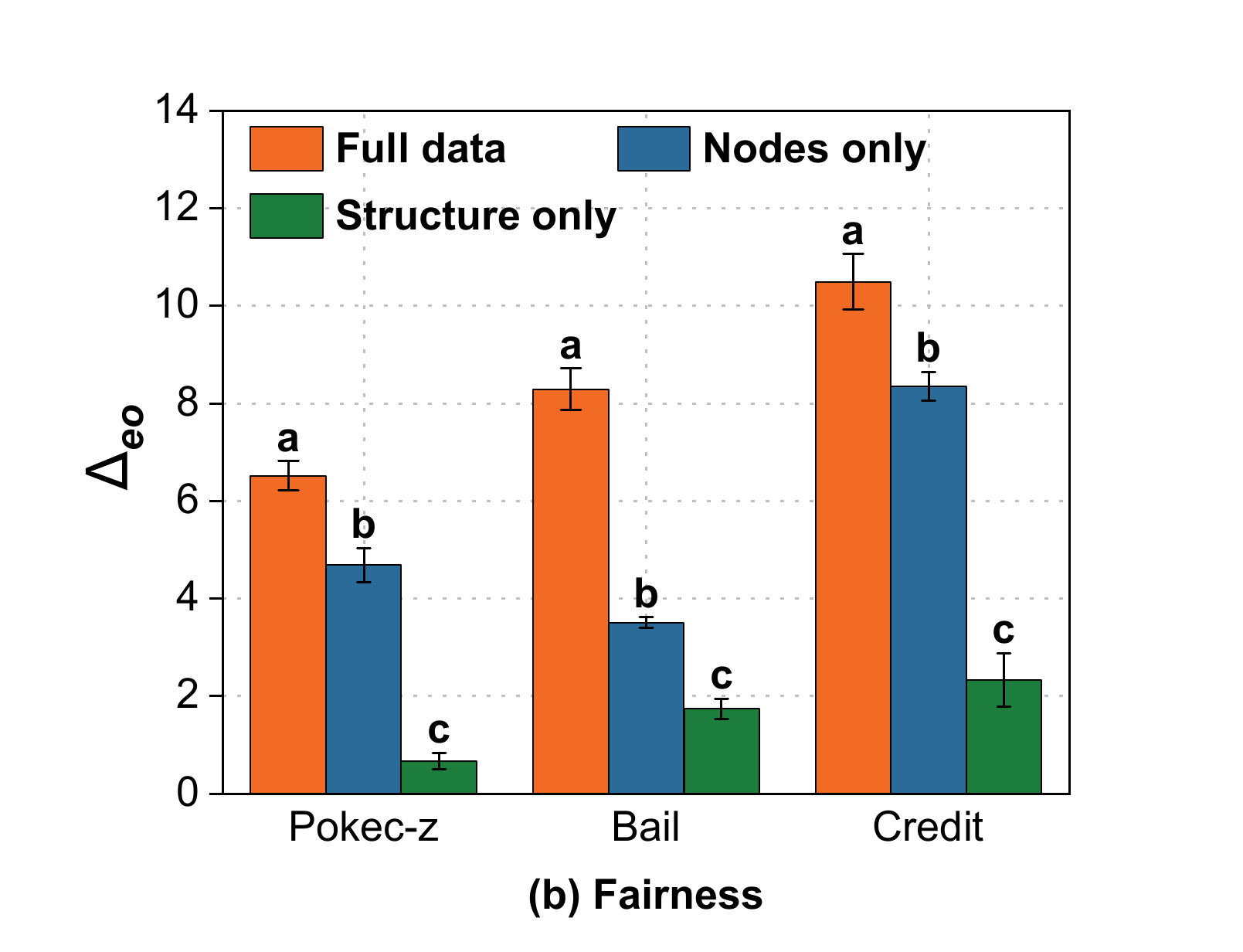} 
        
    \end{minipage}
    \caption{\textbf{The performance of model utility and fairness of the three training strategies, full data, only nodes, and only topology, on the Pokec-z, Bail, and Credit datasets.}}
    \label{partial}
\end{figure}

{\raggedright\textbf{Partial data training.}}\hspace{1em}Inspired by the above theoretical analysis, we further design experiments to validate our strategy—that training on partial data can effectively reduce bias in the representation caused by the sensitive attribute. Specifically, we analyzed the fairness performance of GNNs under different training strategies, as shown in Fig.~\ref{partial}.  Using node classification as the downstream task, the specific training strategies are as follows: (1) A two-layer GCN trained on complete data; (2) A two-layer MLP trained on only node features; (3) A two-layer GCN trained solely on the graph structure (with an all-one attribute matrix). The latter two strategies are collectively designated as partial data training.

The results in Fig.~\ref{partial} show that compared with complete data training, partial data training achieves improved fairness performance but at the expense of reduced model utility. A possible explanation for these results is that training on partial data reduces the source of bias, thereby improving fairness. However, compared to complete data training, the model lacks some key classification information, leading to diminished utility.

In summary, we identify the challenges confronted by partial data training: how does the model achieve the trade-off between fairness and model utility when training with partial data?

\section{Methodology}
To tackle the challenges related to training on partial data, we present a framework called FairDTD for learning fair GNNs through dual-teacher distillation of features and structure. We present a thorough description of FairDTD, with the framework presented in Fig.~\ref{FairDTD}. In dual-teacher distillation, we employ two fairness teachers of feature and structure to guide student learning. In graph-level distillation, we introduce intermediate layer representation alignment as additional supervision to guide the training of students. In learning node-specific temperatures, we adaptively soften the teacher's predictions based on the confidence levels of their outputs, facilitating effective knowledge transfer.

In the following sections, we present a thorough explanation of each component that constitutes the FairDTD framework. Finally, we present the training algorithm to enhance understanding of the FairDTD process.

\begin{figure*}[htbp]
    \centering
    \includegraphics[width=\textwidth]{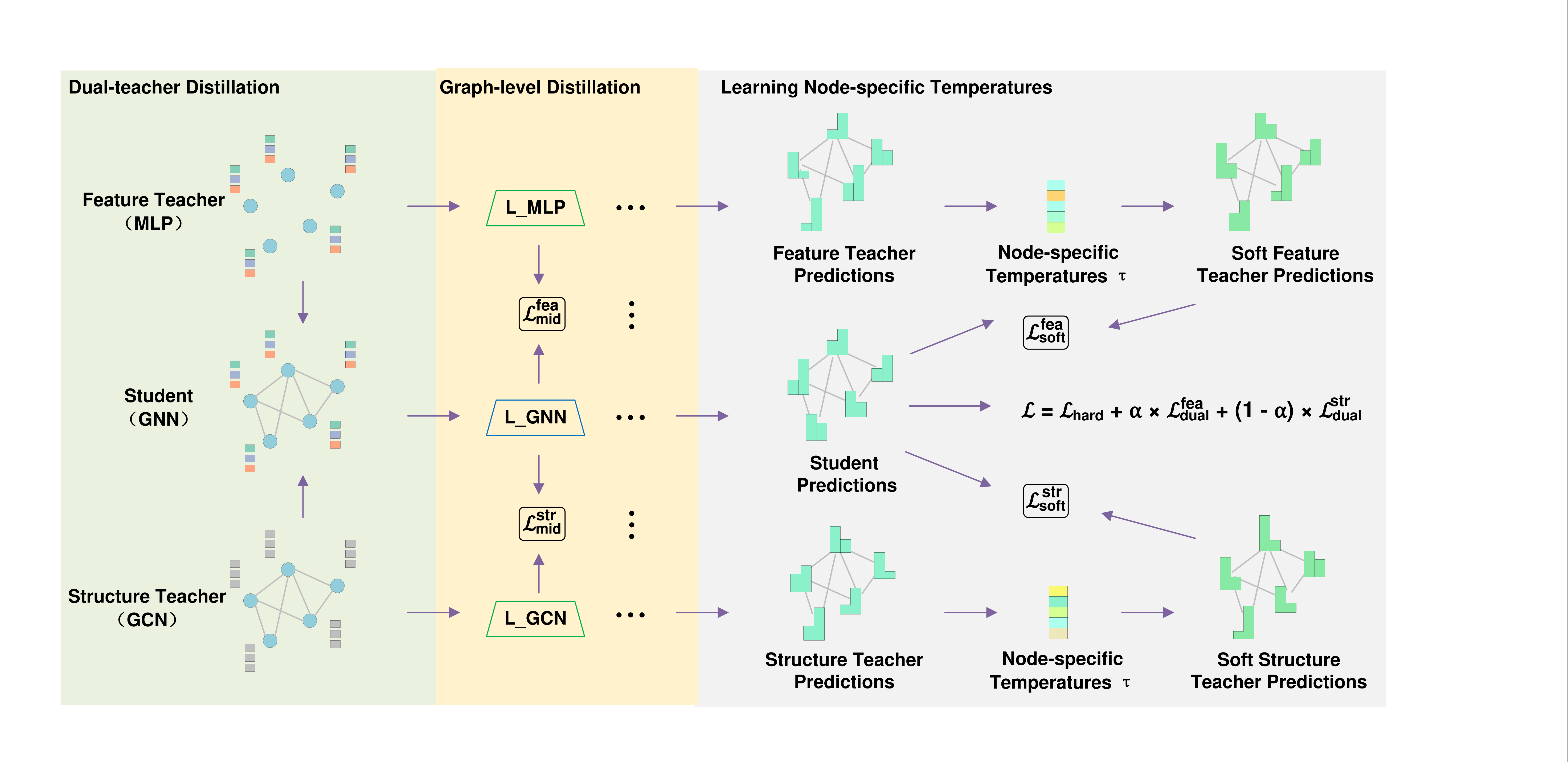} 
    \caption{\textbf{The proposed FairDTD framework consists of three components: (a) dual-teacher distillation, (b) graph-level distillation, and (c) learning node-specific temperatures. The GNN student learns fair and informative knowledge under the guidance of the feature teacher and the structure teacher.}}
    \label{FairDTD}
\end{figure*}

\subsection{Dual-teacher distillation based on feature and structure}
In this subsection, we describe the key components of dual-teacher distillation along with its associated optimization objective. Our approach is motivated by empirical analyses of partial data training. The primary challenge to address is avoiding the loss of utility caused by the lack of information inherent in partial data training. 

To tackle this issue, we employ two fairness-oriented teacher models, $f_{fea}$ and $f_{str}$, to provide targeted guidance to the student model. The fairness expert $f_{fea}$ takes only the node feature \bm{\mathrm{X}} as the input and obtains the output logits $\bm{\mathrm{Z}}_{fea}$ using a two-layer MLP, defined as follows:
\begin{equation}
\bm{\mathrm{Z}}_{fea}=f_{fea}(\bm{\mathrm{X}}),
\end{equation}
where $f_{fea}$ is a two-layer MLP with the Relu activation function, and its training objective is as follows:
\begin{align}
\min _{\theta_{fea}} \mathcal{L}_{fea} = -\mathbb{E}_{v_i \sim \mathcal{V}} \Big( y_i \log \left( \sigma(z_i^{fea}) \right) + (1 - y_i) \log \left( 1 - \sigma(z_i^{fea}) \right) \Big).
\label{eq6}
\end{align}

The fair expert $f_{str}$ takes only the graph structure \bm{\mathrm{A}} (all-one node attribute matrix) as the input and then obtains the output logits $\bm{\mathrm{Z}}_{str}$ through a two-layer GCN, defined as follows:
\begin{equation}
\bm{\mathrm{Z}}_{str}=f_{str}(\bm{\mathrm{A}}),
\end{equation}
where $f_{str}$ is a two-layer GCN, and its training objective is defined as follows:
\begin{align}
\min _{\theta_{str}} \mathcal{L}_{str} = -\mathbb{E}_{v_i \sim \mathcal{V}} \Big( y_i \log \left( \sigma(z_i^{str}) \right) + (1 - y_i) \log \left( 1 - \sigma(z_i^{str}) \right) \Big).
\label{eq8}
\end{align}

This framework ensures that the feature-based teacher model $f_{fea}$  focuses on fairness-driven learning from node-level information, effectively complementing the structural guidance provided by  $f_{str}$. 

As discussed in Section \ref{analysis}, $f_{fea}$ and $f_{str}$ help mitigate the source of bias, ensuring that $\bm{\mathrm{Z}}_{fea}$ and $\bm{\mathrm{Z}}_{str}$ contain fairer node prediction information. However, since $f_{fea}$ and $f_{str}$ are trained on partial data, the accuracy of these predictions may be compromised. 

To address this, we employ a dual-teacher model, aiming to provide comprehensive supervisory information.
We design a dual distillation loss to maximize the guidance provided to the student model, balancing fairness and utility. The student model is motivated to imitate the predictions of the fairness-oriented teacher models, thereby facilitating the transfer of knowledge. The student model $f_{stu}$ takes complete data as the input and  obtains the output logits $\bm{\mathrm{Z}}_{stu}$, defined as follows:
\begin{equation}
\bm{\mathrm{Z}}_{stu}=f_{stu}(\bm{\mathrm{X}}, \bm{\mathrm{A}}),
\end{equation}
where $f_{stu}$ is a two-layer GNN, and its training objective is defined as follows:
\begin{align}
\min _{\theta_{stu}} \mathcal{L}_{hard} = -\mathbb{E}_{v_i \sim \mathcal{V}} \Big( y_i \log \left( \sigma(z_i^{stu}) \right) + (1 - y_i) \log \left( 1 - \sigma(z_i^{stu}) \right) \Big).
\label{eq10}
\end{align}

Finally, leveraging $\bm{\mathrm{Z}}_{fea}$, $\bm{\mathrm{Z}}_{str}$ and $\bm{\mathrm{Z}}_{stu}$, the dual-teacher distillation loss is defined as follows:
\begin{equation}
\mathcal{L}_{soft}^{fea}=\mathcal{H}_{K L}\left(\operatorname{softmax}(\frac{\bm{\mathrm{Z}}_{stu}}{\tau}) \| \operatorname{softmax}(\frac{\bm{\mathrm{Z}}_{fea}}{\tau})\right),
\label{eq11}
\end{equation}
\vspace{-0.3cm}
\begin{equation}
\mathcal{L}_{soft}^{str}=\mathcal{H}_{K L}\left(\operatorname{softmax}(\frac{\bm{\mathrm{Z}}_{stu}}{\tau}) \| \operatorname{softmax}(\frac{\bm{\mathrm{Z}}_{str}}{\tau})\right),
\label{eq12}
\end{equation}
where $\tau$ denotes a temperature parameter that regulates the smoothness of the output distributions.

\subsection{Graph-level distillation}
In this subsection, we introduce graph-level distillation to further enhance the teacher-student model. While the dual-teacher distillation generates soft labels through the output of the last layer to provide additional supervisory signals for the student, it does not account for intermediate supervision within the teacher model. This limitation may hinder the model’s ability to learn fair representations effectively, as deep neural networks excel at capturing feature representations at various levels in multi-layer architectures \citep{bengio2013representation}.

To address this, we adopt intermediate representation distillation as an additional supervisory signal to guide the student model’s training. This method also indirectly supplements the missing information caused by partial data training. In FairDTD, we use the intermediate graph-level representations $\bm{\mathrm{R}}_{fea}$ and $\bm{\mathrm{R}}_{str}$ from the dual-teacher model as supervisory signals. These representations encapsulate more latent information, enabling the student model to learn effectively. We redefine Eqs.~(5), (7), and (9) to include intermediate representations as follows:
\begin{equation}
\bm{\mathrm{Z}}_{fea}, \bm{\mathrm{R}}_{fea} =f_{fea}(\bm{\mathrm{X}}), \hspace{1cm} \bm{\mathrm{Z}}_{str}, \bm{\mathrm{R}}_{str}=f_{str}(\bm{\mathrm{A}}), \hspace{1cm} \bm{\mathrm{Z}}_{stu}, \bm{\mathrm{R}}_{stu}=f_{stu}(\bm{\mathrm{X}}, \bm{\mathrm{A}}).
\end{equation}

In particular, our goal is to maximize the consistency of the intermediate representations of the teacher and student models. The graph-level distillation loss is formulated as follows:
\begin{equation}
\mathcal{L}_{mid}^{fea}=\|\frac{\bm{\mathrm{R}}_{stu}}{\|\bm{\mathrm{R}}_{stu}\|_2}-\frac{\bm{\mathrm{R}}_{fea}}{\|\bm{\mathrm{R}}_{fea}\|_2}\|_2^2,
\label{eq16}
\end{equation}
\vspace{-0.3cm}
\begin{equation}
\mathcal{L}_{mid}^{str}=\|\frac{\bm{\mathrm{R}}_{stu}}{\|\bm{\mathrm{R}}_{stu}\|_2}-\frac{\bm{\mathrm{R}}_{str}}{\|\bm{\mathrm{R}}_{str}\|_2}\|_2^2,
\label{eq17}
\end{equation}
where $\| \cdot \|_2$ denotes the $L_2$ norm,  which measures the difference between the normalized intermediate representations.

\subsection{Learning node-specific temperatures}
In this subsection, we introduce the concept of learning node-specific temperatures to enhance knowledge transfer from teacher to student models. While dual-teacher distillation and graph-level distillation ensure that teacher models encode fair and information-rich representations, a critical remaining challenge is ensuring effective transfer of this knowledge to the student model, thereby improving performance.

Learning node-specific temperatures addresses the limitations inherent in the temperature mechanism used in traditional distillation methods. In conventional knowledge distillation, a predefined temperature parameter is used to generate soft targets. The teacher model's informative dark knowledge is embedded in the soft targets, which infer the probability that a node belongs to a given class. The softmax function with a fixed temperature is defined as follows:
\vspace{-0.3cm}
\begin{equation}
p\left(\bm{\mathrm{Z}}, \tau\right)=\frac{\exp \left(\bm{\mathrm{Z}} / \tau\right)}{\sum \exp \left(\bm{\mathrm{Z}} / \tau\right)},
\end{equation}
where the temperature parameter $\tau$ is used to control the softness of each prediction. The distribution of predictions becomes smoother as the temperature rises, and sharper as the temperature falls. Thus, the temperature is responsible for regulating the balance between the true label knowledge and the dark knowledge.

However, using a fixed temperature treats the softening of all teacher logits equally, ignoring variations in node representations. This uniform approach fails to provide precise guidance for individual nodes, which can hinder knowledge transfer. Moreover, existing research has shown that temperature scaling can improve model performance \citep{zhang2020self}.

To confront these issues, we propose an entropy-based approach to learn node-specific temperatures. This approach assigns a unique temperature $\tilde{\tau}_i$ to each node, controlling the degree of softening individually. The softmax function with node-specific temperatures is defined as:
\begin{equation}
p\left(z_i, \tilde{\tau}_i\right)=\frac{\exp \left(z_i / \tilde{\tau}_i\right)}{\sum \exp \left(z_i / \tilde{\tau}_i\right)},
\end{equation}
where the temperature parameter $\tilde{\tau}_i$ is the specific temperature of the $i$-th node. Obviously, $\tilde{\tau}_i$ determines the softening direction of each node. The temperature of the node is related to the teacher's confidence for each respective node \citep{guo2023boosting}. 

Therefore, we can judge the teacher's confidence for each node through the entropy of the teacher's logits. The lower the entropy value, the higher the teacher's confidence for this node. Specifically, we calculate the node-specific temperature through the probability distribution of the teacher and the confidence of the teacher, defined as follows:
\begin{equation}
\operatorname{Entropy}\left(\tilde{z}_i\right)=-\sum_{c=1}^C z_i^c \log \left(z_i^c\right),
\label{eq20}
\end{equation}
\vspace{-0.3cm}
\begin{equation}
\tilde{\tau}_i=\operatorname{softmax}\left(\operatorname{MLP}\left(\operatorname{Concat}\left(\tilde{z}_i, \operatorname{Entropy}\left(\tilde{z}_i\right)\right)\right)\right),
\label{eq21}
\end{equation}
where $z_i^c$ is the probability that the node $i$ belongs to class $c$ and $\operatorname{Concat}$ represents the concatenation function. We use $\tilde{\tau}_i = \tilde{\tau}_i \times (\tau_{max} - \tau_{min}) + \tau_{min}$ to constrain the temperature within a predetermined range $[\tau_{min}, \tau_{max}]$.

In summary, our method assigns lower temperatures to predictions with higher entropy (lower confidence) and higher temperatures to predictions with lower entropy (higher confidence). This adjustment ensures sharper predictions are softened more, while overly smooth predictions are sharpened, facilitating more effective knowledge transfer from teacher models to the student model.

\subsection{Fair GNN student training}
In this subsection, we introduce the final training objective for the fair GNN student. FairDTD trains a fair student model $f_{stu}$ using the trained $f_{fea}$ and $f_{str}$ as dual teacher models. The primary objective is for the student model to imitate the outputs of the fair dual-teacher models. To achieve this, we combine the dual distillation loss and the graph-level loss into the final distillation loss between the student and teacher models, defined as:
\begin{equation}
\mathcal{L}_{dual}^{fea}=\mathcal{L}_{soft}^{fea} + \mathcal{L}_{mid}^{fea},
\label{eq22}
\end{equation}
\vspace{-0.3cm}
\begin{equation}
\mathcal{L}_{dual}^{str}=\mathcal{L}_{soft}^{str} + \mathcal{L}_{mid}^{str}.
\label{eq23}
\end{equation}

By incorporating the student model’s own training loss, the final complete training loss is expressed as:
\begin{equation}
\mathcal{L}_{final}^{stu}=\mathcal{L}_{hard} + \alpha \mathcal{L}_{dual}^{fea} + (1-\alpha) \mathcal{L}_{dual}^{str},
\label{eq24}
\end{equation}

where $\alpha$ is a hyperparameter that balances the feature teacher and the structural teacher.

To facilitate a clearer understanding of FairDTD, we provide its complete training process in Algorithm \ref{algorithm}.

\subsection{Theorem and proof for FairDTD}

In this subsection, we present Theorem \ref{theorem1} to formally derive the feasibility of FairDTD’s path-specific blocking approach under the dual fairness teacher distillation framework, and prove that it achieves fairness at the causal level, as detailed below:

\renewcommand{\proofname}{\textbf{Proof}}

\begin{theorem}
\label{theorem1}
If multiple undesirable causal paths exist between the sensitive attribute $S$ and the prediction output $Y$, and each path relies on a different intermediate variable (e.g., node features \bm{\mathrm{X}} or graph structure \bm{\mathrm{A}}), then training models that each use only a single intermediate variable allows for path-specific blocking. This strategy weakens the overall impact of $S$ on the learned representation \bm{\mathrm{Z}}, thereby improving the fairness.
\end{theorem}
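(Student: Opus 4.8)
The plan is to formalize the causal-graph reasoning already developed in Section~\ref{analysis} and convert it into a statement about statistical dependence between $S$ and $\bm{\mathrm{Z}}$. First I would make precise what ``undesirable causal path'' means by enumerating the two relevant paths identified in Fig.~\ref{SCM}(b): the feature path $S \rightarrow \bm{\mathrm{E}} \rightarrow \bm{\mathrm{X}} \rightarrow \bm{\mathrm{Z}} \rightarrow Y$ and the structure path $S \rightarrow \bm{\mathrm{E}} \rightarrow \bm{\mathrm{A}} \rightarrow \bm{\mathrm{Z}} \rightarrow Y$. The key observation is that each path passes through exactly one intermediate variable ($\bm{\mathrm{X}}$ or $\bm{\mathrm{A}}$) before reaching $\bm{\mathrm{Z}}$, so the influence of $S$ on $\bm{\mathrm{Z}}$ decomposes additively (or, more carefully, factorizes) across these two channels. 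I would write the total dependence of $\bm{\mathrm{Z}}$ on $S$ as a function of the flow through $\bm{\mathrm{X}}$ and the flow through $\bm{\mathrm{A}}$, invoking Definition~\ref{dingyi1} to characterize what it means to block one channel.

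Next I would carry out the blocking argument for a single teacher. Consider the feature teacher $f_{fea}$, which by construction (Eq.~(5)) takes only $\bm{\mathrm{X}}$ as input and never sees $\bm{\mathrm{A}}$; I would argue that this is operationally equivalent to intervening on the structure path so that the $S \rightarrow \bm{\mathrm{E}} \rightarrow \bm{\mathrm{A}} \rightarrow \bm{\mathrm{Z}}$ channel carries no information into the teacher's representation, i.e.\ that path is specifically blocked in the sense of Definition~\ref{dingyi1}. Symmetrically, the structure teacher $f_{str}$ (Eq.~(7)), trained on an all-one attribute matrix, blocks the feature path. The step I would spell out carefully is that blocking a path removes the corresponding term from the dependence decomposition, so each teacher's representation $\bm{\mathrm{Z}}_{fea}$, $\bm{\mathrm{Z}}_{str}$ depends on $S$ only through a single residual channel rather than both.

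Then I would close the loop by transferring this property to the student through distillation. The student minimizes $\mathcal{L}_{final}^{stu}$ in Eq.~(24), whose distillation terms $\mathcal{L}_{dual}^{fea}$ and $\mathcal{L}_{dual}^{str}$ drive $\bm{\mathrm{Z}}_{stu}$ toward the (partially blocked) teacher outputs via the KL and intermediate-representation losses. I would argue that at the distillation optimum the student's dependence on $S$ is upper bounded by a convex combination (weighted by $\alpha$ and $1-\alpha$) of the two teachers' residual dependences, each of which is strictly smaller than the full two-channel dependence that an unconstrained model trained on complete data would inherit. Hence the overall impact of $S$ on $\bm{\mathrm{Z}}_{stu}$ is weakened, which is exactly the fairness improvement claimed.

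The hard part will be making the ``dependence decomposes across paths'' step rigorous rather than heuristic: in a general nonlinear GNN the contributions of $\bm{\mathrm{X}}$ and $\bm{\mathrm{A}}$ to $\bm{\mathrm{Z}}$ do not literally add, and mutual information does not split cleanly along causal paths without extra assumptions (such as the disentanglement $\bm{\mathrm{C}} \perp \bm{\mathrm{E}}$ asserted in Fig.~\ref{SCM}). I expect to need either a linearization/first-order argument or an explicit independence assumption on the two channels to guarantee that blocking one path cannot be compensated by increased leakage through the other. A secondary subtlety is the distillation step: I would have to assume the student attains (or approaches) the distillation minimum and that the KL objective genuinely transfers the fairness property, which I would state as a working assumption rather than derive from scratch.
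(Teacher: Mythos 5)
Your proposal takes essentially the same route as the paper: identify the two paths $p_1: S\rightarrow \bm{\mathrm{E}}\rightarrow \bm{\mathrm{X}}\rightarrow \bm{\mathrm{Z}}\rightarrow Y$ and $p_2: S\rightarrow \bm{\mathrm{E}}\rightarrow \bm{\mathrm{A}}\rightarrow \bm{\mathrm{Z}}\rightarrow Y$, observe that the feature-only and structure-only models each block one of them, and then bound the dependence of the combined representation on $S$ --- the paper does exactly this, concluding with $I(\bm{\mathrm{Z}};S)\leq I(\bm{\mathrm{Z}}_{fea};S)+I(\bm{\mathrm{Z}}_{str};S)$ invoked as ``sub-additivity of mutual information.'' The difficulty you flag (that dependence does not split cleanly across paths without an extra independence assumption) is precisely the step the paper glosses over, since that inequality itself requires something like $\bm{\mathrm{Z}}_{fea}\perp \bm{\mathrm{Z}}_{str}\mid S$; so your plan matches the paper's argument while being more candid about where it is heuristic.
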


\begin{proof}
Consider the following two causal paths: $p_1$: $S$ $\rightarrow$ \bm{\mathrm{E}} $\rightarrow$ \bm{\mathrm{X}} $\rightarrow$ \bm{\mathrm{Z}} $\rightarrow$ $Y$ and $p_2$: $S$ $\rightarrow$ \bm{\mathrm{E}} $\rightarrow$ \bm{\mathrm{A}} $\rightarrow$ \bm{\mathrm{Z}} $\rightarrow$ $Y$. We construct two models that each rely exclusively on a single intermediate variable: (1) A feature-based model, which learns the representation $\bm{\mathrm{Z}}_{fea}$ solely from node features \bm{\mathrm{X}}. Since it does not depend on the graph structure \bm{\mathrm{A}}, path $p_2$ is explicitly blocked. (2) A structure-based model, which learns the representation $\bm{\mathrm{Z}}_{str}$ solely from the graph structure \bm{\mathrm{A}}. As it excludes node features \bm{\mathrm{X}}, path $p_1$ is effectively blocked. If the final representation \bm{\mathrm{Z}} is learned by combining both models, then by the sub-additivity property of mutual information, we have:
\begin{equation}
I(\bm{\mathrm{Z}}; S) \leq I(\bm{\mathrm{Z}}_{fea}; S) + I(\bm{\mathrm{Z}}_{str}; S),
\end{equation}
where $I(\bm{\mathrm{Z}}; S)$ denotes the mutual information between the learned representation $\bm{\mathrm{Z}}$ and the sensitive attribute $S$. A smaller value of $I(\bm{\mathrm{Z}}; S)$ indicates that less sensitive information is encoded in \bm{\mathrm{Z}}. This demonstrates that blocking the transmission of the sensitive attribute $S$ along each specific causal path can effectively reduce the overall dependency between $S$ and the final representation $\bm{\mathrm{Z}}$, thereby enhancing fairness at the causal level.
\end{proof}

\begin{algorithm}[t]
    \SetKwInOut{Input}{Input}  
    \SetKwInOut{Output}{Output}  
    \caption{Training Algorithm of FairDTD} \label{algorithm}
    \Input{$\mathcal{G} = (\mathcal{V}, \mathcal{E}, \mathrm{X})$, a two-layer MLP feature teacher $f_{fea}$, a two-layer GCN structure teacher $f_{str}$, a two-layer GNN student $f_{stu}$, $\alpha$, $\tau_{max}$, $\tau_{min}$.}
    \Output{The GNN student model trained with parameter $\theta_{stu}$.}

    // \textbf{Feature teacher training}
    
    \While{not converge}{
        \For{1 $\rightarrow epoch$ to $epoch_{fea}$}{
            $\bm{\mathrm{Z}}_{fea}, \bm{\mathrm{R}}_{fea} \leftarrow f_{fea}(\bm{\mathrm{X}})$;
            
            Calculate $\mathcal{L}_{fea}$ according to Eq. \eqref{eq6};

            Back-propagation to update parameters $\theta_{fea}$; 
        }
    $\mathbf{end while}$
    }
        
    // \textbf{Structure teacher training}

    \While{not converge}{
        \For{1 $\rightarrow epoch$ to $epoch_{str}$}{
            $\bm{\mathrm{Z}}_{str}, \bm{\mathrm{R}}_{str} \leftarrow f_{str}(\bm{\mathrm{A}})$;
            
            Calculate $\mathcal{L}_{str}$ according to Eq. \eqref{eq8};

            Back-propagation to update parameters $\theta_{str}$; 
        }
    $\mathbf{end while}$
    }

    // \textbf{Fair GNN student training}

    \While{not converge}{
        \For{1 $\rightarrow epoch$ to $epoch_{stu}$}{
            $\bm{\mathrm{Z}}_{stu}, \bm{\mathrm{R}}_{stu} \leftarrow f_{stu}(\bm{\mathrm{X}}, \bm{\mathrm{A}})$;

             Calculate $\tilde{\tau}_i^{fea}$ and $\tilde{\tau}_i^{str}$ based on $\bm{\mathrm{Z}}_{fea}$ and $\bm{\mathrm{Z}}_{str}$ according to Eq. \eqref{eq20} and Eq. \eqref{eq21};

             Calculate $\mathcal{L}_{soft}^{fea}$ and $\mathcal{L}_{soft}^{str}$ based on $\tilde{\tau}_i^{fea}$ and $\tilde{\tau}_i^{str}$ according to Eq. \eqref{eq11} and Eq. \eqref{eq12};

             Calculate $\mathcal{L}_{mid}^{fea}$ and $\mathcal{L}_{mid}^{str}$ based on $\bm{\mathrm{R}}_{fea}$, $\bm{\mathrm{R}}_{str}$ and $\bm{\mathrm{R}}_{stu}$ according to Eq. \eqref{eq16} and Eq. \eqref{eq17};

             Calculate $\mathcal{L}_{dual}^{fea}$ based on $\mathcal{L}_{soft}^{fea}$ and $\mathcal{L}_{mid}^{fea}$ according to Eq. \eqref{eq22};

             Calculate $\mathcal{L}_{dual}^{str}$ based on $\mathcal{L}_{soft}^{str}$ and $\mathcal{L}_{mid}^{str}$ according to Eq. \eqref{eq23};
            
            Calculate $\mathcal{L}_{hard}$ according to Eq. \eqref{eq10};

            Calculate $\mathcal{L}_{final}^{stu}$ according to Eq. \eqref{eq24};

            Back-propagation to update parameters $\theta_{stu}$; 
        }
    $\mathbf{end while}$
    }

    \textbf{return} $\theta_{stu}$.
\end{algorithm}

\section{Experiments}
In this section, we present comprehensive experiments to assess the efficacy of the suggested approach. Specifically, we perform experiments on multiple fairness datasets to investigate and address the following questions:

$\mathbf{RQ1}$: Compared with the baseline methods, can FairDTD maintain the utility performance while improving fairness? $\mathbf{RQ2}$: What is the impact of each component within the FairDTD framework on the overall model performance? $\mathbf{RQ3}$: Does FairDTD generate node representations that effectively distinguish between different groups compared to vanilla? $\mathbf{RQ4}$: How does the time cost of FairDTD compare to the baseline methods? $\mathbf{RQ5}$: How do the relevant hyperparameters affect FairDTD?

\subsection{Experimental settings}
\subsubsection{Datasets}
We use three commonly used real-world datasets for our experiments: Pokec-z, Pokec-n, and Credit. The statistical details of each dataset are summarized in Table \ref{dataset}. A brief overview of the datasets is as follows:
\vspace{-1.2ex}
\begin{itemize}[leftmargin=*,labelindent=2em,itemsep=-0.1em]
    \item \textbf{Pokec-z/n} \citep{dai2021say, takac2012data} is sourced from Pokec, a widely used social networking platform in Slovakia. These datasets reflect social network data collected from two distinct provinces. Where nodes represent individual users, while edges denote the follower relationships between them. The ``region'' is treated as the sensitive attribute, and the users' working fields are binarized to serve as the label for prediction.
    \item \textbf{Credit} \citep{yeh2009comparisons} is a dataset containing information related to credit card users. Where nodes stand in for credit card users, and edges are based on how similar customers' payment habits are. The aim is to estimate if a user would be able to make timely repayments of their credit card debt in the upcoming month, with ``age'' being regarded as the sensitive attribute.
\end{itemize}

\begin{table}[htbp]
    \centering
    \caption{\textbf{An overview of the statistics for the three real-world datasets.}}
    \label{dataset}
    \begin{tabular}{llll}
        \toprule 
        \textbf{Dataset} & \textbf{Pokec-z} & \textbf{Pokec-n}  & \textbf{Credit} \\
        \midrule 
        \#Nodes & 7,659 & 6,185  & 30,000 \\
        
        \#Edges & 29,476 & 21,844  & 2,843,716 \\
        
        \#Features & 59 & 59  & 13 \\
        
        Sens. & Region & Region  & Age \\
        
        Label & Working field & Working field  & Future default \\
        
        \bottomrule 
    \end{tabular}
\end{table}

\subsubsection{Baselines}
We conduct a comparative analysis of FairDTD against four state-of-the-art fairness methodologies: NIFTY, FairVGNN, CAF, FairGKD and FairGP. A brief overview of these methodologies is as follows:
\vspace{-1.2ex}
\begin{itemize}[leftmargin=*,labelindent=2em,itemsep=-0.1em]
    \item \textbf{NIFTY} \citep{agarwal2021towards} enforces counterfactual fairness constraints by maximizing the similarity between the counterfactual graph generated by flipping the sensitive attribute and the original graph, but it overlooks latent biases within the graph structure and struggles to produce realistic and reliable counterfactuals.
    \item \textbf{FairVGNN} \citep{wang2022improving} mitigates the leakage of sensitive attributes by masking the sensitive-related channels and clamping the encoder weights but lacks explicit modeling of the sensitive attribute propagation mechanism.
    \item \textbf{CAF} \citep{guo2023towards} aims to identify counterfactuals from the training data and uses them as supervisory signals for learning fair node representations through disentangled representation learning. However, this approach depends on the quality of the counterfactual samples and has limited scalability to large-scale graphs.
    \item \textbf{FairGKD} \citep{zhu2024devil} constructs a synthetic teacher with contrastive learning and a fixed temperature parameter for distillation. While informative, this setup may constrain effective knowledge transfer.
    \item \textbf{FairGP} \citep{luo2025fairgp} proposes a partition-based fairness-aware graph transformer model that mitigates the influence of high-order sensitive nodes and optimizes the global attention mechanism, significantly enhancing fairness in graph representation learning while improving computational efficiency. While computationally efficient, it focuses on structural interventions and lacks a causal framework.
\end{itemize}

\subsubsection{Evaluation metrics}
We use the accuracy of the model to assess its utility. For fairness performance, we adopt two widely recognized fairness measures: Statistical Parity (SP) \citep{dwork2012fairness} and Equal Opportunity (EO) \citep{hardt2016equality}. To achieve statistical parity, we use $\Delta_{sp}$ as the evaluation metric. Specifically, it is defined as $\Delta_{sp}=|P(\hat{y}=1 \mid s=0)-P(\hat{y}=1 \mid s=1)|$, which assesses the ratio of instances from different groups of sensitive attributes that are categorized as positive or negative. Similarly, for equal opportunity, we use $\Delta_{eo}$ as the evaluation metric. It is defined as $\Delta_{eo}=|P(\hat{y}=1 \mid y=1, s=0)-P(\hat{y}=1 \mid y=1, s=1)|$, which evaluates if the model generates consistent predictions for people with similarly non-sensitive attributes. Smaller values of $\Delta_{sp}$ and $\Delta_{eo}$ indicate a fairer model.

\subsubsection{Implementation details}
To assess the generalizability of FairDTD across various architectures, we employ GCN and GIN as the encoder backbones. For consistency across all datasets, the hidden dimension for each GNN backbone is uniformly set to 64. For baseline methods, we use results obtained from their respective open-source code repositories, adhering to the hyperparameter settings provided by the authors. The experiments are repeated five times, and the average outcomes are documented.

In FairDTD, we utilize a two-layer MLP for the feature teacher, while both the structure teacher and the student model are constructed using a two-layer GCN or GIN. For the GCN and GIN backbones, we use the Adam optimizer during training, with learning rates (lr) of \{0.01, 0.0001\}. The number of training epochs is set to 700. In addition, we adjust the balance parameter $\alpha$ mentioned in the method in \{0.1, 0.2, ..., 0.9\}. A single NVIDIA A800 GPU with 80GB memory is used for all experiments. The PyTorch framework is used to implement the models.

\begin{table*}[htbp]
    \centering
    \caption{\textbf{A comparative analysis of our proposed method, FairDTD, against baseline approaches is presented, focusing on accuracy and fairness metrics. The best performance for each backbone GNN is highlighted in bold.}}
    \label{baseline}
    \resizebox{\textwidth}{!}{
        \begin{tabular}{c|c|ccc|ccc|ccc}  
            \toprule
            \multirow{2}{*}{\textbf{Encoder}} & \multirow{2}{*}{\textbf{Method}} & & \textbf{Pokec-z} & & & \textbf{Pokec-n} & & & \textbf{Credit} & \\ 
            \cmidrule{3-11}  
            & & ACC ↑ & \textit{$\Delta_{sp}$} ↓ & \textit{$\Delta_{eo}$} ↓ & ACC ↑ & \textit{$\Delta_{sp}$} ↓ & \textit{$\Delta_{eo}$} ↓ & ACC ↑ & \textit{$\Delta_{sp}$} ↓ & \textit{$\Delta_{eo}$} ↓ \\ 
            \cmidrule{1-11} 
            \multirow{6}{*}{GCN} & GCN & 67.83±0.47 & 7.31±0.59 & 6.52±0.30 & 67.24±0.33 & 6.39±0.39 & 7.27±0.67 & 73.37±0.02 & 12.65±0.18 & 10.49±0.57 \\ 
            & NIFTY & 67.25±0.28 & 4.38±1.02 & 3.89±0.72 & 66.58±0.37 & 5.61±0.24 & 4.15±0.86 & 73.41±0.04 & 11.63±0.07 & 9.02±0.42  \\ 
            & FairVGNN & 68.39±3.01 & 3.63±1.75 & 4.74±1.94 & 68.11±0.35 & 4.13±1.65 & 6.14±3.04 & 78.51±0.39 & 4.39±3.75  & 2.70±2.72  \\ 
            & CAF & 67.90±0.82 & 2.70±0.14 & 3.67±0.12 & 67.25±1.05 & 2.12±1.50 & 2.10±1.70 & 73.90±1.52 & 5.83±1.78  & 5.60±1.42  \\ 
            & FairGKD & 69.18±0.54 & 4.05±1.09 & 3.06±1.03 & 67.79±0.35 & 0.74±0.48 & 2.07±1.03 & 79.63±0.39 & 3.42±1.61  & 2.78±1.12  \\
            & FairGP & 68.39±0.43 & 2.58±0.21 & 1.86±0.52 & 68.09±0.39 & 1.06±0.42 & 2.20±1.32 & 79.31±0.64 & 2.62±0.41 & \textbf{1.70±0.27}  \\
            \cmidrule{2-11} 
            & FairDTD & \textbf{69.71±0.15} & \textbf{1.93±1.03} & \textbf{1.76±0.82} & \textbf{69.49±0.53} & \textbf{0.49±0.38} & \textbf{1.98±0.55} & \textbf{80.83±0.30} & \textbf{2.41±0.26}  & 1.90±0.14  \\
            \cmidrule{1-11}  
            \multirow{6}{*}{GIN} & GIN & 66.82±0.53 & 5.14±0.52 & 4.56±0.50 & 66.54±0.93 & 5.92±1.48 & 4.56±1.68 & 73.65±0.67 & 13.24±3.08 & 10.45±3.70 \\ 
            & NIFTY & 65.57±1.34 & 2.70±1.28 & 3.23±1.92 & 66.37±1.51 & 3.84±1.05 & 3.24±1.60 & 74.43±0.42 & 6.01±4.19  & 5.41±3.03  \\ 
            & FairVGNN & \textbf{68.24±0.95} & 2.23±1.44 & 3.88±1.19 & 67.43±1.31 & 4.04±2.56 & 6.92±3.87 & 77.50±0.89 & 2.62±0.97  & 1.71±0.78  \\ 
            & CAF & 67.71±0.75 & 2.64±0.78 & 2.30±0.31 & 67.57±1.87 & 4.10±0.64 & 2.87±1.32 & 74.57±1.06 & 4.26±1.29  & 6.32±1.81  \\ 
            & FairGKD & 67.58±2.05 & 1.83±0.93 & 2.23±1.02 & 68.36±0.53 & 1.37±0.84 & 2.99±1.53 & 79.18±0.65 & 3.08±3.53  & 2.57±3.10  \\
            & FairGP & 67.57±0.60 & 1.67±0.37 & 1.29±0.18 & \textbf{68.81±0.82} & 1.85±0.54 & 1.69±0.28 & 78.21±0.33 & 1.30±0.28 & 1.58±0.67  \\
            \cmidrule{2-11} 
            & FairDTD & 67.73±0.77 & \textbf{1.01±0.70} & \textbf{0.94±0.85} & 67.74±0.71 & \textbf{1.16±0.98} & \textbf{1.27±0.54} & \textbf{80.29±0.27} & \textbf{1.06±0.52}  & \textbf{1.18±0.53} \\
            \bottomrule
        \end{tabular}
        
    }
\end{table*}

\subsection{Performance comparison}
To answer \textbf{RQ1}, we use GCN or GIN as the backbone models and compare FairDTD against four baseline approaches for the node classification task. Table \ref{baseline} displays the utility and fairness of the baseline approach as well as FairDTD. From this table, we make the following key observations:
 
\begin{enumerate}
\item Compared with the baseline methods, FairDTD achieves the best fairness on the three datasets, which proves the effectiveness of FairDTD in learning fair GNN methods by reducing the sources of bias. 

\item FairDTD maintains a comparable or even better classification performance, which proves that FairDTD can solve the problem of the decline in model utility due to the training with partial data and also indicates that FairDTD effectively achieves an improved balance between fairness and model utility. 

\item Compared with NIFTY, FairDTD avoids graph-level generation by modeling sensitive attribute propagation through causal paths, enabling more efficient and scalable fairness interventions. Compared with FairVGNN, FairDTD addresses this gap by using a causal graph to trace both feature-based and structure-based pathways of sensitive influence, enabling targeted, theory-driven blocking strategies. Compared with CAF, FairDTD uses principled causal interventions instead, enhancing generalization and robustness without depending on external counterfactual data. Compared with FairGKD's synthetic fair teacher model, FairDTD demonstrates superior performance across nearly all aspects. This highlights the effectiveness of our dual fair teacher model in providing richer knowledge to the student model during training. Additionally, unlike FairGKD's use of a fixed temperature, FairDTD learns node-specific temperatures to adaptively soften the teacher logits, thereby providing more accurate guidance for the student model and achieving a better balance between fairness and utility. Compared with FairGP, FairDTD explicitly models and blocks causal pathways of sensitive attributes via both node features and structure, supporting more targeted fairness control.

\item In most cases, FairDTD achieves optimal performance on both GCN and GIN backbones, demonstrating its strong generalization capability across different GNN architectures, which allows for flexible application in various scenarios. Additionally, we can observe that the performance of GIN in classification has declined compared to GCN. This might be due to the following reasons: Compared to GCN, which aggregates feature information through simple convolution operations, GIN introduces a more complex domain aggregation mechanism and MLP for representation learning for the graph isomorphism problem. This may cause excessive aggregation and instead blur the discrimination ability of nodes. 

 \item FairDTD achieves the best performance on the Credit dataset for both GCN and GIN. We speculate that this discrepancy may be attributed to the increased complexity of the training data in the Credit dataset, relative to the Pokec-z and Pokec-n datasets. As a more complicated model, GIN can fully utilize its stronger representational ability when processing complex data and thus performs well in this scenario.
\end{enumerate}

\begin{table*}[htbp]
    \centering
    \caption{\textbf{Analysis of model fairness and node classification bias among distinct FairDTD variants. The optimal results for each GNN backbone are highlighted in bold.}}
    \label{ablation}
    \resizebox{\textwidth}{!}{
        \begin{tabular}{c|l|ccc|ccc|ccc}  
            \toprule
            \multirow{2}{*}{\textbf{Encoder}} & \multirow{2}{*}{\textbf{Model Variants}}  & & \textbf{Pokec-z} & & & \textbf{Pokec-n} & & & \textbf{Credit} & \\ 
            \cmidrule{3-11}  
            & & ACC ↑ & \textit{$\Delta_{sp}$} ↓ & \textit{$\Delta_{eo}$} ↓ & ACC ↑ & \textit{$\Delta_{sp}$} ↓ & \textit{$\Delta_{eo}$} ↓ & ACC ↑ & \textit{$\Delta_{sp}$} ↓ & \textit{$\Delta_{eo}$} ↓ \\ 
            \cmidrule{1-11} 
            \multirow{6}{*}{GCN} & FairDTD & \textbf{69.71±0.15} & \textbf{1.93±1.03} & \textbf{1.76±0.82} & \textbf{69.49±0.53} & \textbf{0.49±0.38} & \textbf{1.98±0.55} & \textbf{80.83±0.30} & \textbf{2.41±0.26}  & \textbf{1.90±0.14}  \\
            & w/o FT & 66.88±0.94 & 2.69±0.70 & 2.41±0.23 & 67.68±0.42 & 2.15±0.45 & 3.18±0.52 & 77.08±0.29 & 3.86±0.48 & 2.60±0.23 \\ 
            & w/o ST & 69.35±0.97 & 3.43±1.67 & 3.36±1.56 & 68.88±1.58 & 3.71±0.29 & 4.41±0.51 & 78.76±0.30 & 4.84±0.15 & 3.46±0.35  \\ 
            & w/o GD & 68.83±1.20 & 5.82±0.25 & 5.33±1.58 & 68.71±0.68 & 2.94±0.22 & 3.24±2.74 & 78.60±1.14 & 5.60±1.50 & 3.06±0.62 \\ 
            & w/o NST & 68.74±0.60 & 2.64±1.11 & 3.16±2.30 & 68.52±0.56 & 2.01±0.84 & 4.08±1.60 & 79.11±0.78 & 4.40±0.63 & 2.54±0.79 \\ 
            \cmidrule{1-11}  
            \multirow{6}{*}{GIN} & FairDTD & 67.73±0.77 & \textbf{1.01±0.70} & 0.94±0.85 & \textbf{67.74±0.71} & \textbf{1.16±0.98} & \textbf{1.27±0.54} & \textbf{80.29±0.27} & \textbf{1.06±0.52}  & 1.18±0.53 \\
            & w/o FT & 65.60±1.36 & 3.22±1.63 & \textbf{0.44±0.26}   & 63.24±1.01 & 1.92±2.38 & 1.89±0.40 &   76.47±0.81 & 1.78±1.54 & \textbf{0.81±0.38} \\ 
            & w/o ST & \textbf{68.44±0.40} & 4.93±1.30 & 3.35±0.48   & 67.11±0.67 & 1.70±0.24 & 2.81±0.29 &   77.12±0.40 & 3.96±0.21 & 3.53±0.26  \\ 
            & w/o GD & 65.76±0.78 & 2.07±1.68 & 3.04±0.46   & 65.80±0.20 & 2.68±1.81 & 3.40±1.16 &   77.80±0.67 & 3.72±0.11 & 2.59±0.17 \\ 
            & w/o NST & 67.15±0.62 & 3.67±2.12 & 0.83±0.32  & 66.32±0.38 & 2.41±1.13 & 2.04±0.24 &   78.56±1.16 & 4.32±0.58 & 1.73±0.34 \\ 
            \bottomrule
        \end{tabular} 
    }
\end{table*}

\subsection{Ablation study}
To answer \textbf{RQ2}, we perform ablation experiments to systematically assess the impact of each component in FairDTD on its overall performance. Specifically, we study the influence of four components, namely the feature teacher, the structure teacher, the graph-level distillation, and the learning of node-specific temperatures, on learning fair GNNs. We adopt \textbf{w/o FT} to represent the exclusion of the feature teacher. We adopt \textbf{w/o ST} to represent the exclusion of the structure teacher. We adopt \textbf{w/o GD} to represent the exclusion of the graph-level distillation. We adopt \textbf{w/o NST} to represent the exclusion of the learning of node-specific temperatures and the adoption of a fixed temperature. Table \ref{ablation} presents the results for the different model variants.

From Table \ref{ablation}, the four variants exhibit inferior performance compared to FairDTD in balancing utility and fairness, proving each component's effectiveness and necessity. Compared with the fixed temperature, learning node-specific temperatures can help FairDTD learn more effective knowledge from the dual-teacher model. Moreover, graph-level distillation can help the dual-teacher model learn richer information in the multi-layer neural network to guide the fair GNN student. Finally, compared with \textbf{w/o FT} and \textbf{w/o ST}, FairDTD achieves better performance, which proves that dual-teacher distillation can promote knowledge transfer from the teacher to the student more effectively. It is worth noting that when comparing \textbf{w/o FT} with \textbf{w/o ST}, we observe that although both teachers contain fairness information, the feature teacher takes into account the model utility, while the structure teacher pays more attention to the fairness performance. Therefore, we need to better balance the influence of the feature teacher and the structure teacher's weights on the model to provide targeted guidance to the fair GNN student. Finally, by comparing the best results across the three datasets, we observe that the model’s performance declines regardless of which component is removed. This further demonstrates that the contributions of each component in FairDTD generalize across datasets, rather than being specific to any single dataset.

\begin{figure*}[htbp]
    \raggedright
    \
    \begin{subfigure}[b]{0.24\linewidth}
        \centering
        \includegraphics[width=1\linewidth]{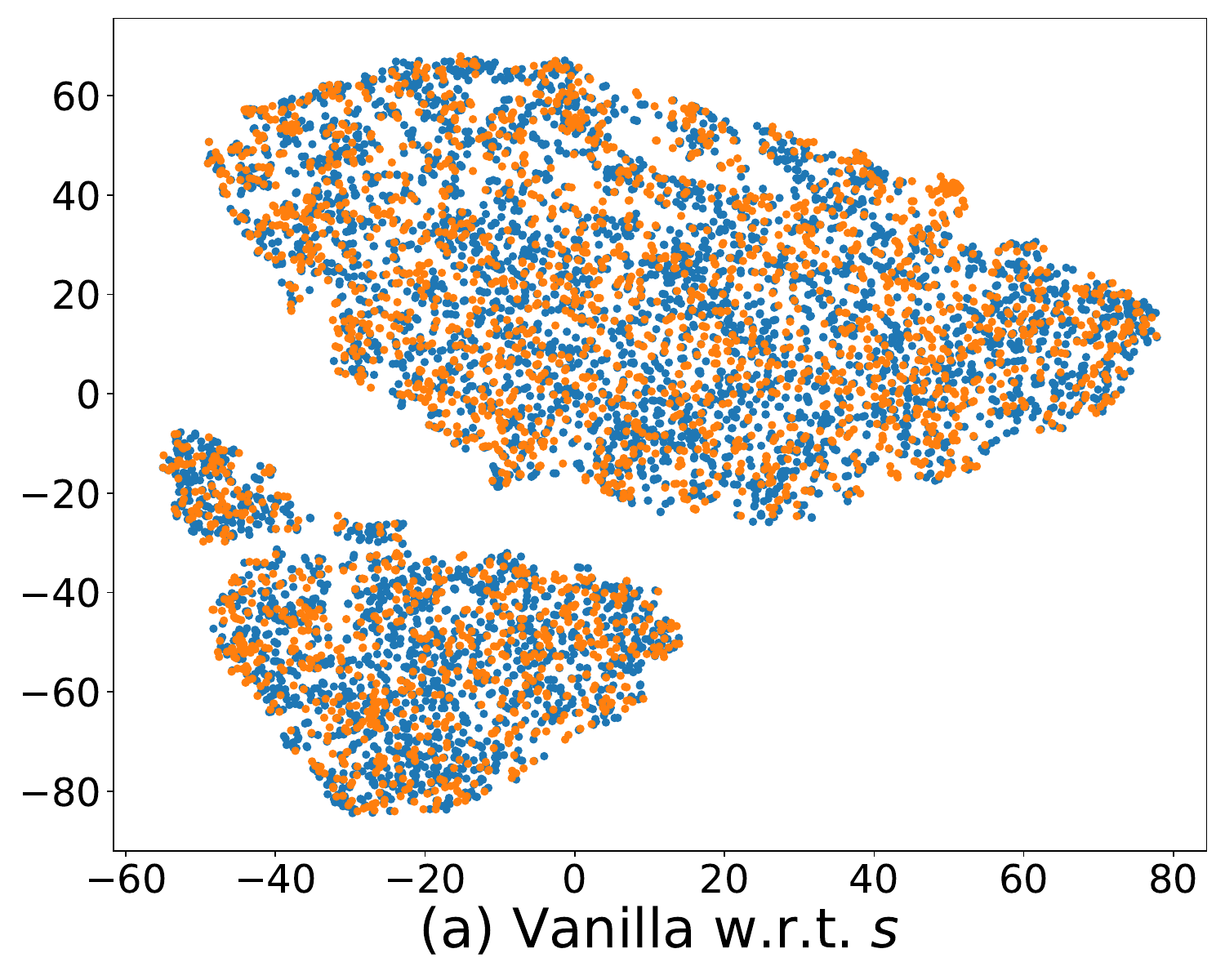}
    \end{subfigure}
    \begin{subfigure}[b]{0.24\linewidth}
        \centering
        \includegraphics[width=1\linewidth]{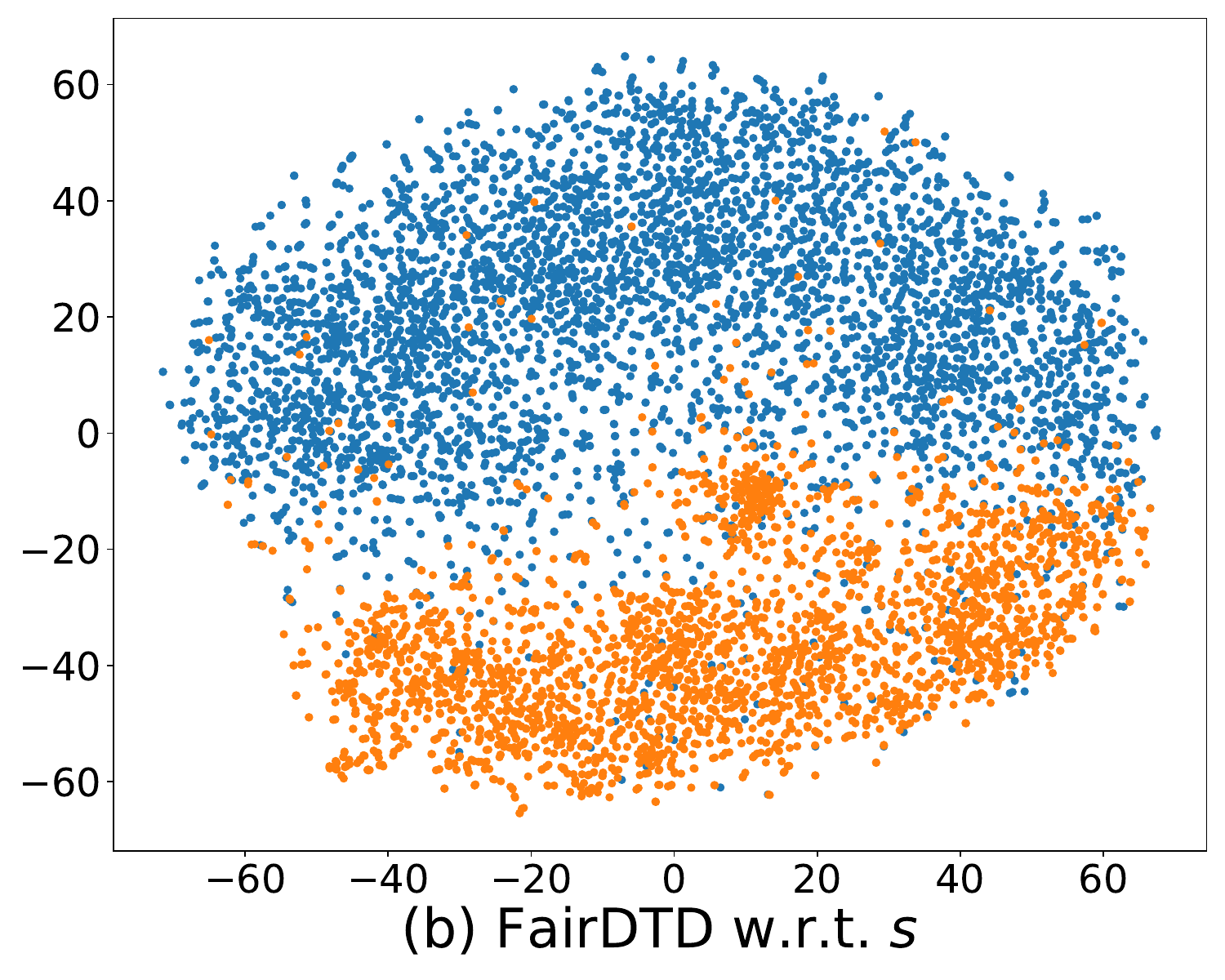}
    \end{subfigure}
    \begin{subfigure}[b]{0.24\linewidth}
        \centering
        \includegraphics[width=1\linewidth]{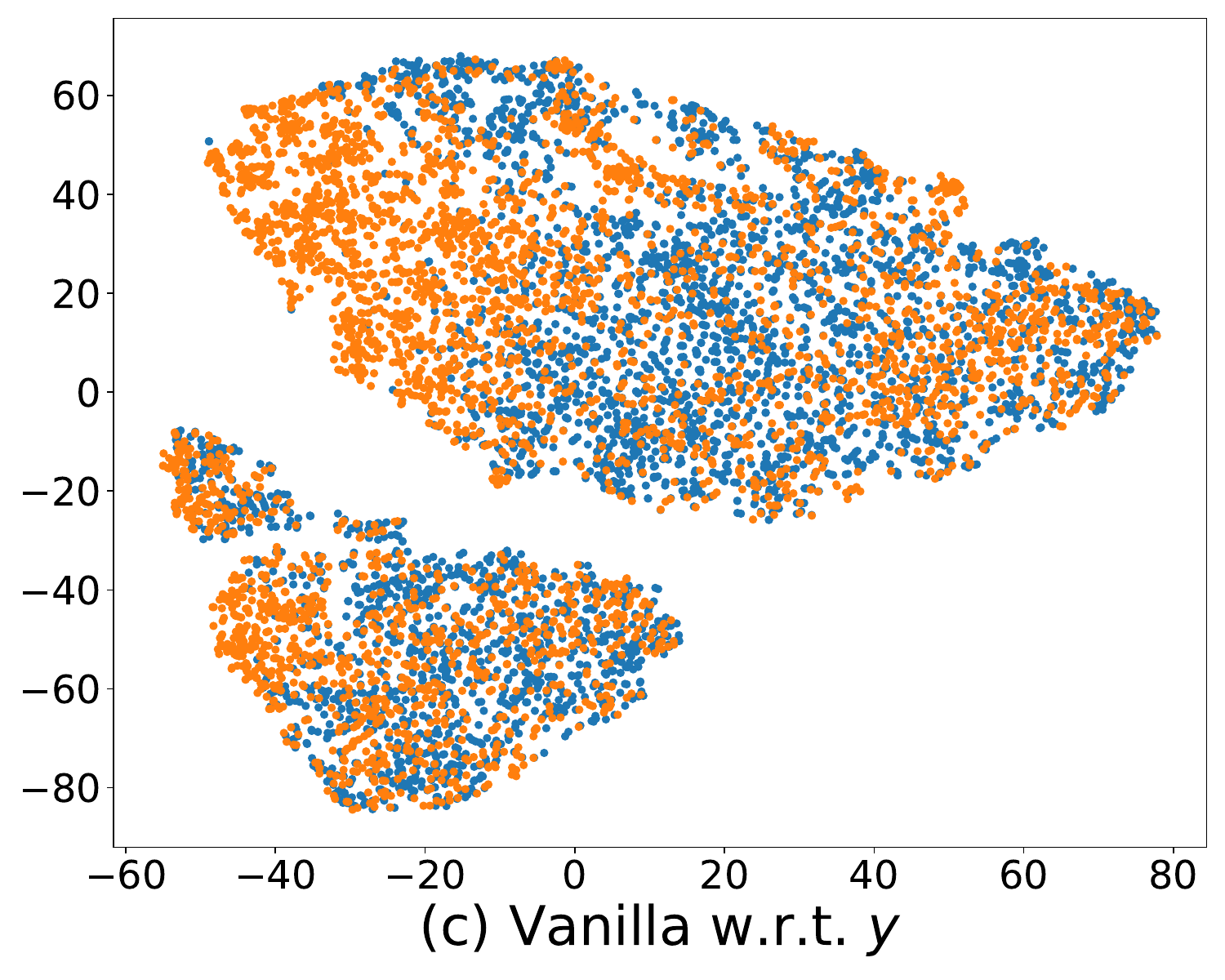}
    \end{subfigure}
    \begin{subfigure}[b]{0.24\linewidth}
        \centering
        \includegraphics[width=1\linewidth]{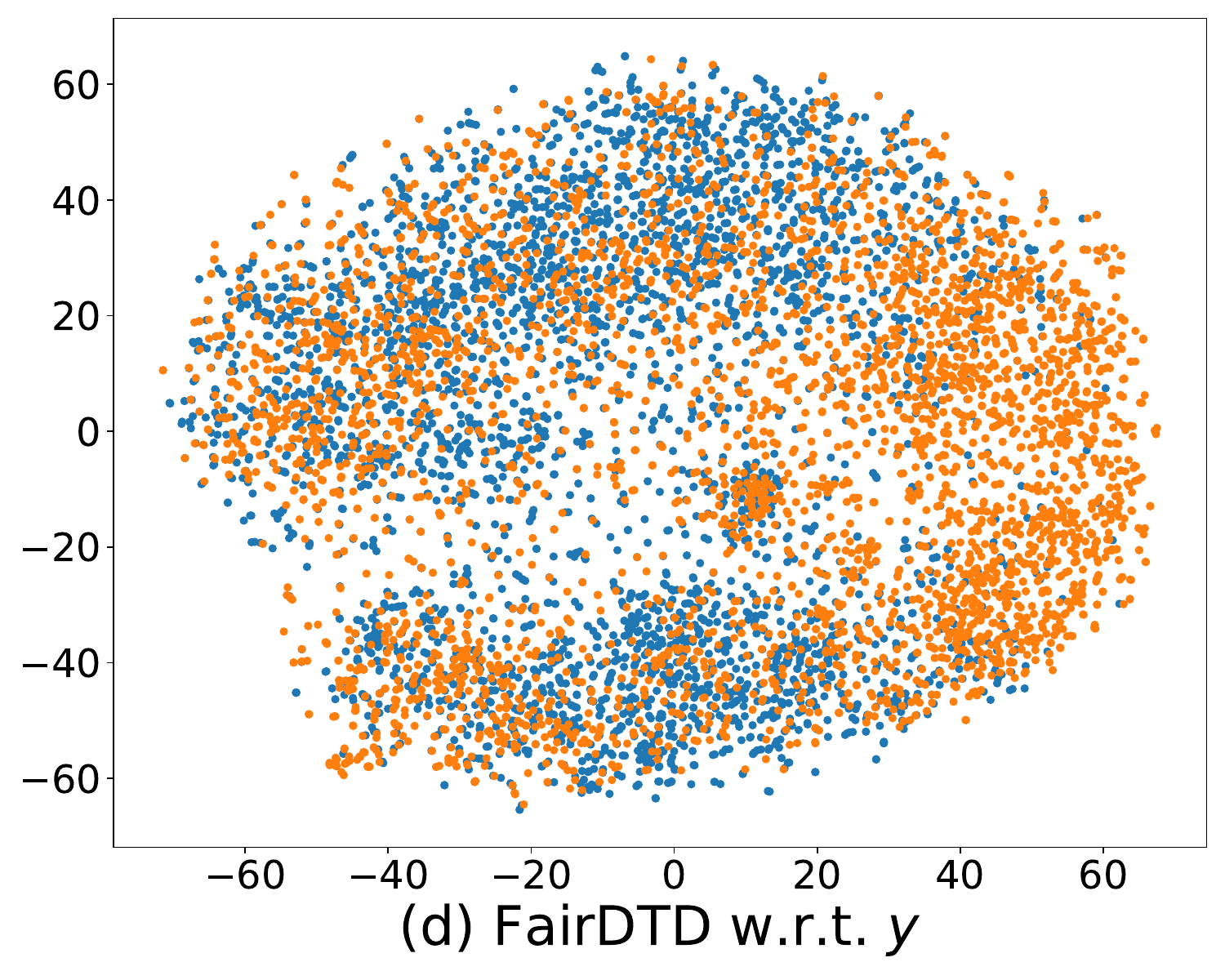}
    \end{subfigure}
    \captionsetup{justification=centering}
    \caption{\textbf{Visualizations of the node representations learned on the Pokec-z dataset.}}
    \label{visualization}
\end{figure*}

\begin{figure}[htbp] 
    \centering
    \begin{minipage}{0.49\linewidth}
        \centering
        \includegraphics[width=1\linewidth]{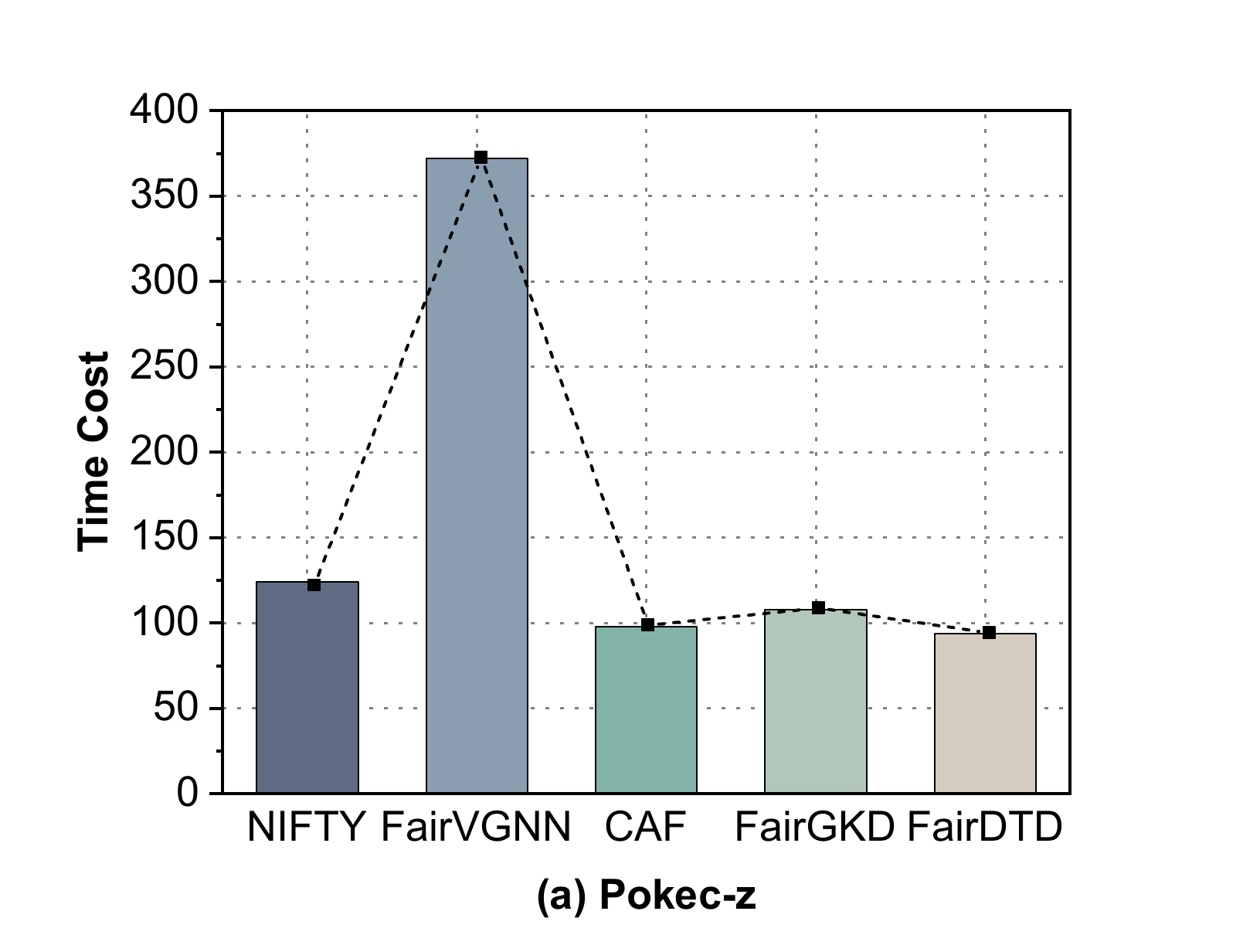} 
        
    \end{minipage}
    \hfill
    \begin{minipage}{0.49\linewidth}
        \centering
        \includegraphics[width=1\linewidth]{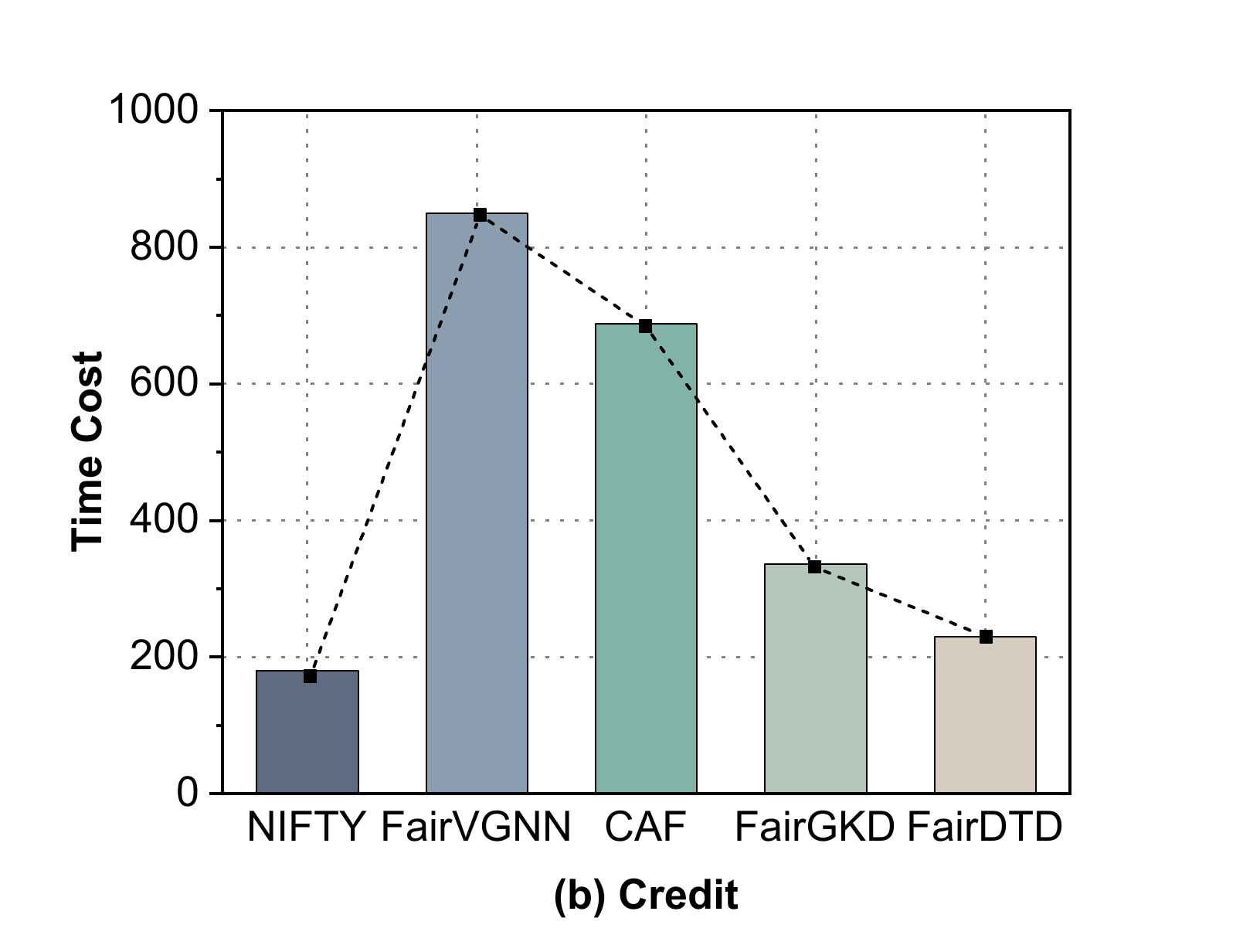} 
        
    \end{minipage}
    \caption{\textbf{Comparison of training time between baseline methods and FairDTD on Pokec-z and Credit (in seconds)}}
    \label{time}
\end{figure}

\subsection{Node representations visualization}
To answer \textbf{RQ3}, we visualize the node representations learned by FairDTD and the vanilla GNN model (i.e., GCN) on the Pokec-z dataset, as shown in Fig. \ref{visualization}. To obtain better visibility, we select samples from the test set. Specifically, the learned representations are projected into a two-dimensional space for visualization using t-SNE. We plot the node representations for both models concerning sensitive attributes and true labels.

First, comparing Fig. 4(a) and Fig. 4(b), we observe that FairDTD, guided by dual-teacher distillation, exhibits stronger discriminatory ability, resulting in clearer classification boundaries for nodes with different sensitive attributes. However, comparing Fig. 4(b) and Fig. 4(d), it is evident that in FairDTD, the classification boundary for sensitive attributes is orthogonal to that of the true labels, indicating that the model’s predictions for true labels do not rely on sensitive attributes \citep{li2024rethinking}. This proves the excellent fairness of FairDTD on Pokec-z. Then, comparing Fig. 4(c) and Fig. 4(d), we can observe that the discrimination of nodes with different true labels on FairDTD is further improved, demonstrating that FairDTD improves the model's utility. 

Additionally, it is evident that nodes with distinct true labels exhibit significant overlap in both Fig. 4(c) and Fig. 4(d). This is attributed to the fact that all models currently exhibit suboptimal predictive performance on the Pokec-z dataset, failing to differentiate these confounding node representations effectively. 

In summary, these results demonstrate FairDTD’s excellence in balancing fairness and model utility, achieving clear classification boundaries while ensuring predictions are not influenced by sensitive attributes.

\subsection{Training time comparison}
To answer \textbf{RQ4}, we analyze the training time overhead of FairDTD with that of the baseline methods on the Pokec-z and Credit datasets, as illustrated in Fig. \ref{time}. Overall, we can observe that FairDTD shows a lower computational cost among all methods, proving our method's efficiency. FairVGNN shows the highest computational cost among all methods because it encompasses a considerable number of parameters and requires a reverse training process. In addition, CAF performs well on Pokec-z but poorly on Credit. This might be related to its counterfactual augmentation module, which aims to find counterfactuals from the training data, resulting in a higher computational cost on larger datasets. Furthermore, compared to FairGKD, which is also based on knowledge distillation, FairDTD exhibits lower time overhead, further validating the efficiency and effectiveness of our proposed FairDTD method.

\begin{figure*}[htbp]
    \raggedright
    \
    \begin{subfigure}[b]{0.32\linewidth}
        \centering
        \includegraphics[width=1\linewidth]{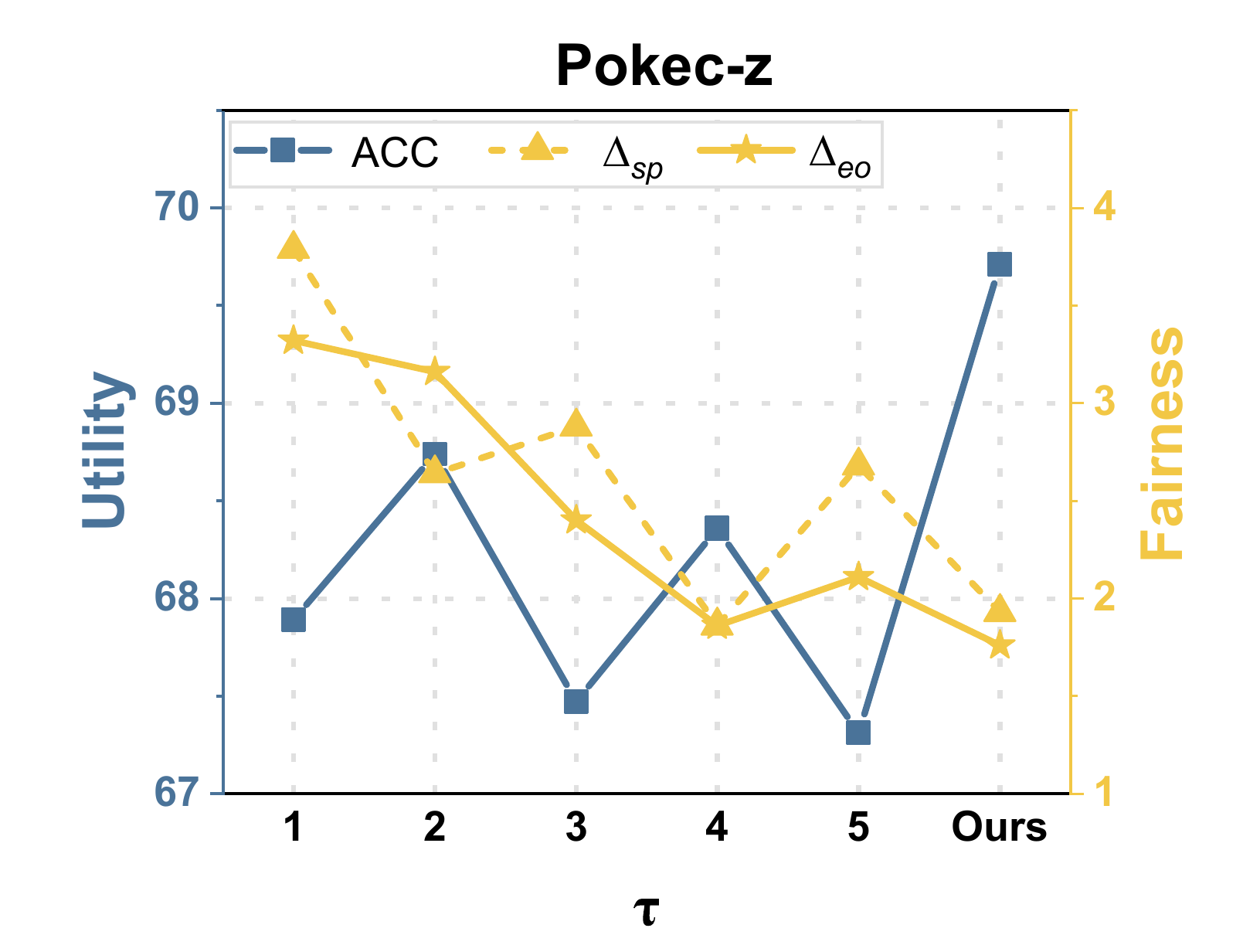}
    \end{subfigure}
    \begin{subfigure}[b]{0.32\linewidth}
        \centering
        \includegraphics[width=1\linewidth]{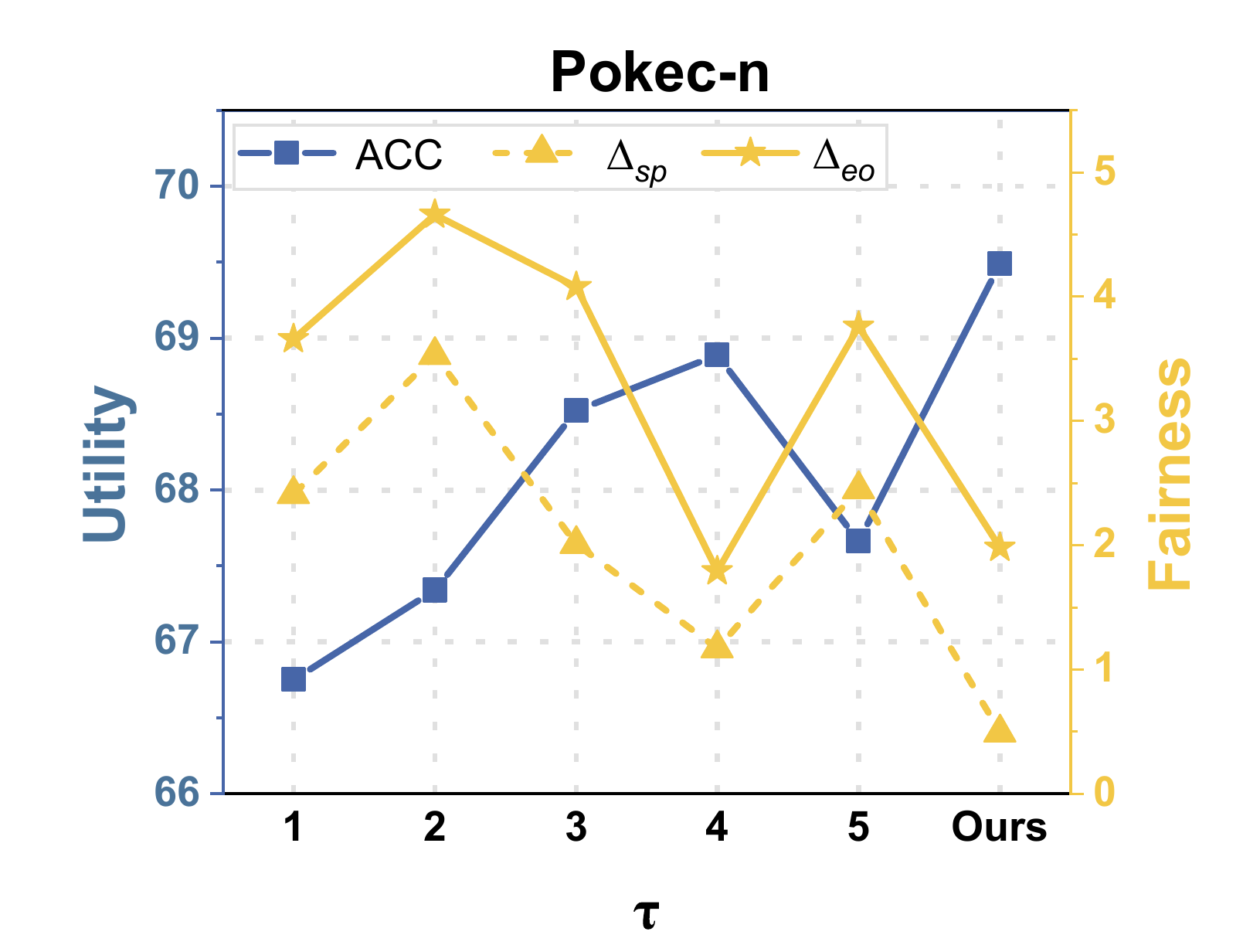}
    \end{subfigure}
    \begin{subfigure}[b]{0.32\linewidth}
        \centering
        \includegraphics[width=1\linewidth]{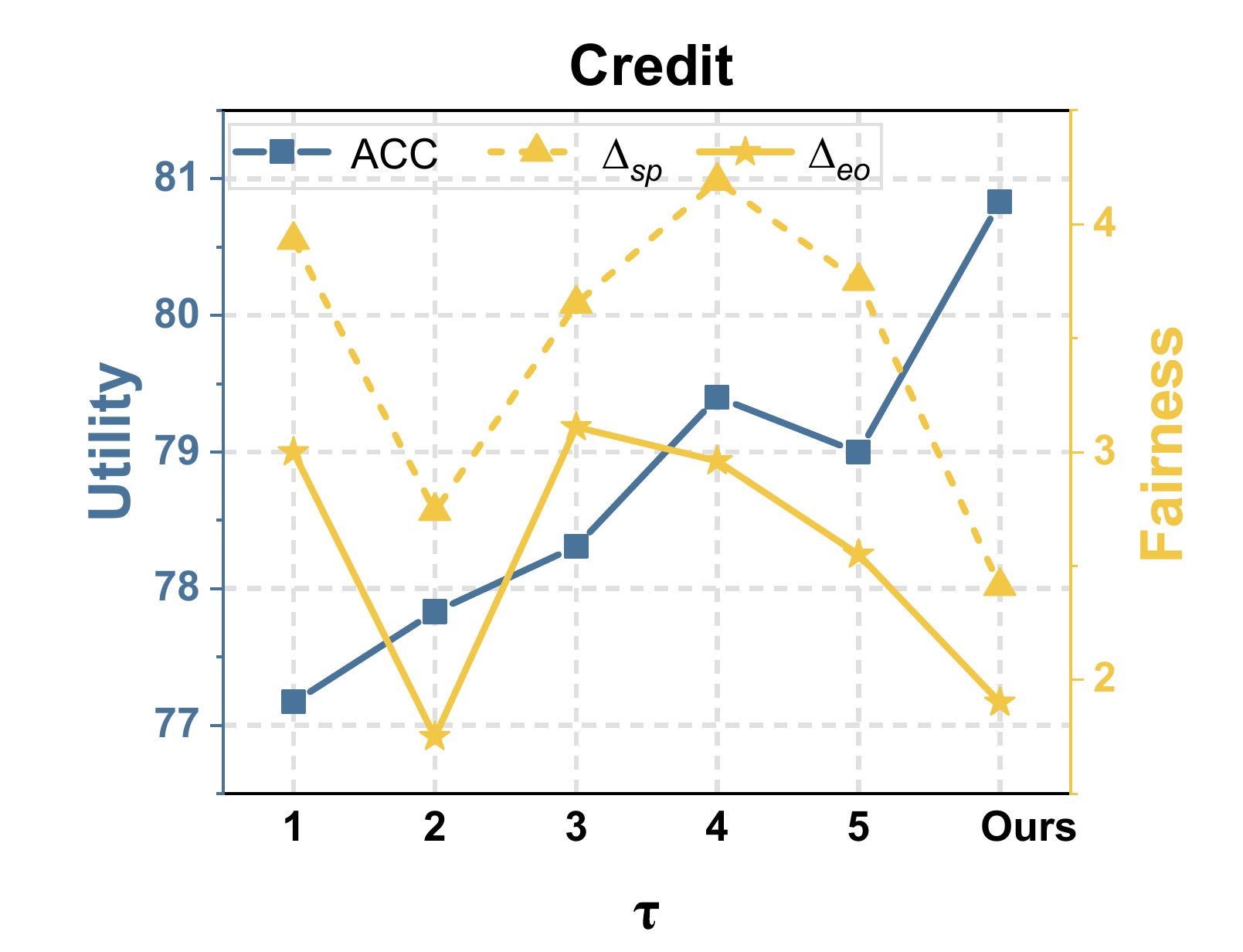}
    \end{subfigure}
    \captionsetup{justification=centering}
    \caption{\textbf{Results of FairDTD with node-specific temperature learning and fixed temperatures ranging from 1 to 5.}}
    \label{temperature}
\end{figure*}

\begin{figure}[htbp] 
    \centering
    \begin{minipage}{0.49\linewidth}
        \centering
        \includegraphics[width=1\linewidth]{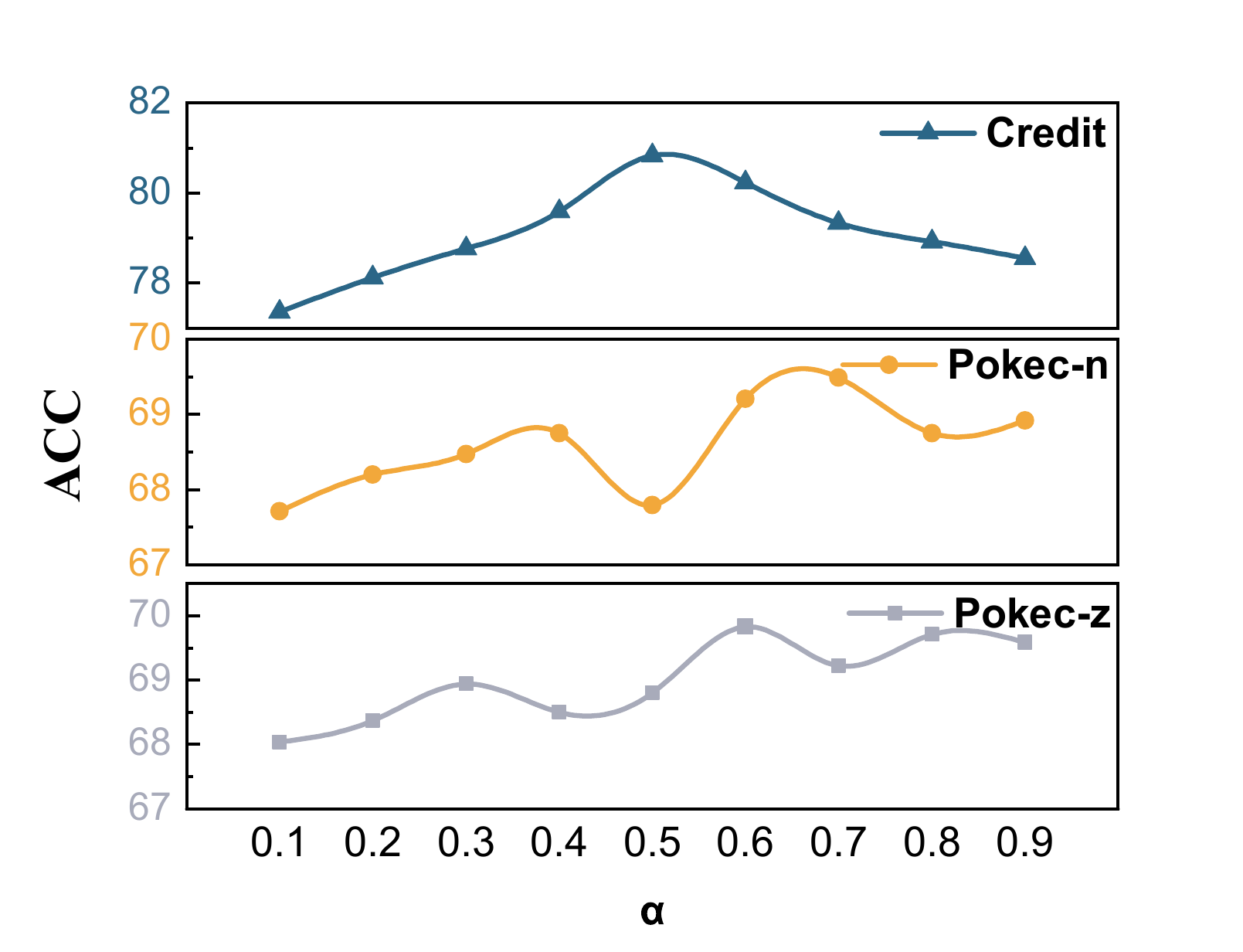} 
        
    \end{minipage}
    \hfill
    \begin{minipage}{0.49\linewidth}
        \centering
        \includegraphics[width=1\linewidth]{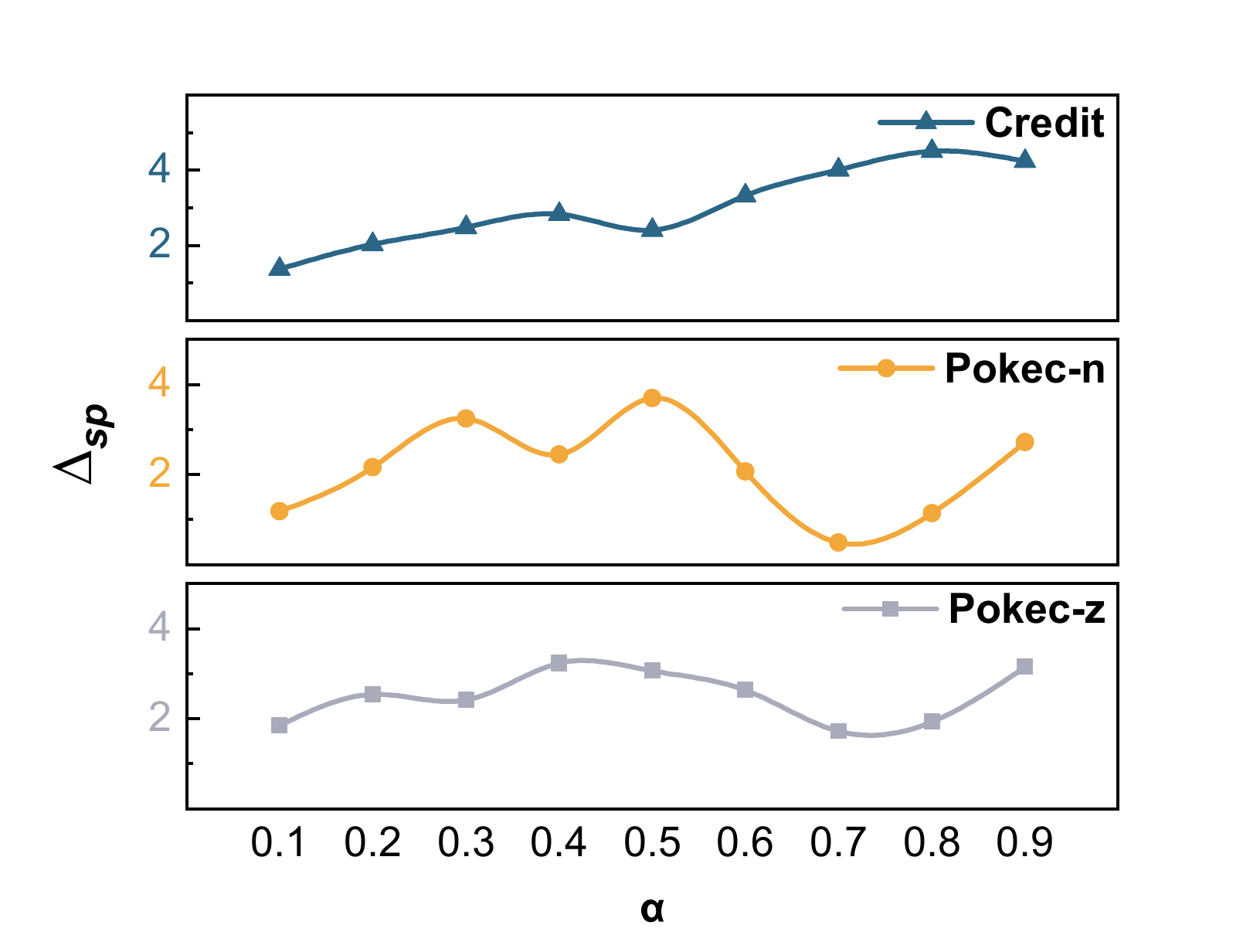} 
        
    \end{minipage}
    \caption{\textbf{Sensitivity analysis of the balance hyperparameter $\alpha$ on the three datasets.}}
    \label{alpha}
\end{figure}

\subsection{Hyper-parameter sensitivity analysis}
To answer \textbf{RQ5}, we performed a sensitivity analysis of the hyperparameters in FairDTD.

{\raggedright\textbf{Analysis of distillation temperature $\tau$.}}\hspace{1em}First, we further verify the influence of the temperature hyperparameter $\tau$ on the model. From Table \ref{ablation}, we have already confirmed the effectiveness of learning node-specific temperatures. Here, we further compare learning node-specific temperatures with fixed temperatures ranging from 1 to 5. Fig. \ref{temperature} shows that learning node-specific temperatures can achieve the best balance between model utility and fairness, indicating that learning node-specific temperatures can effectively distinguish between confounding information and true information to facilitate knowledge transfer.

{\raggedright\textbf{Analysis of balance parameter $\alpha$.}}\hspace{1em}Next, we analyzed the sensitivity of the balance parameter $\alpha$, which determines the relative weights of the feature and structure teachers. To assess its influence on FairDTD, we varied $\alpha$ from 0.1 to 0.9 in increments of 0.1 and recorded the experimental results. As shown in Fig. \ref{alpha}, increasing $\alpha$ improves model accuracy but reduces fairness. This observation aligns with Fig. \ref{partial}, where the feature teacher exhibits higher accuracy while the structure teacher prioritizes fairness. Consequently, the selection of an appropriate $\alpha$ is essential to strike an optimal balance between utility and fairness.

\section{Discussion}
From an applied perspective, FairDTD demonstrates significant practical value across various real-world scenarios. In the social network scenario, evaluations on the Pokec-z and Pokec-n datasets indicate that FairDTD achieves group fairness with respect to the region, significantly reducing prediction disparities among users from different regions. In the credit scoring scenario, experiments on the Credit dataset show that FairDTD effectively mitigates decision-making bias caused by age, thereby enhancing model fairness and transparency. Furthermore,  in socially sensitive graph-structured application scenarios such as healthcare and public governance, it is essential to adopt FairDTD for fair representation learning to prevent traditional GNNs from inadvertently amplifying group biases.

Although FairDTD performs exceptionally well in terms of fairness, it does have some limitations. Specifically, when dealing with changes in sensitive attributes, FairDTD may require retraining to achieve optimal performance. In future work, we plan to explore integrating FairDTD with invariant learning as a potential solution to this issue. Additionally, FairDTD currently focuses on a single sensitive attribute; future research will aim to extend the framework to scenarios involving multiple sensitive attributes.

\section{Conclusion}
In this paper, we present a novel perspective for learning fair GNNs. By constructing causal structure models, we identify that bias originates from both node features and graph structures. Through our analysis, we find that a simple partial data training strategy, where the model is trained exclusively on node features or graph structures, can achieve fairness that is on par with, or potentially exceeds, the performance of current state-of-the-art fair GNN methods, albeit at the expense of model utility. To address this trade-off, we propose FairDTD, a framework that combines knowledge distillation with partial data training to learn fair GNNs. Specifically, FairDTD employs two fairness-oriented teacher models, referred to as the feature expert and the structure expert, to guide the learning of a fair GNN student. Additionally, it incorporates graph-level distillation and node-specific temperature learning to facilitate the transfer of knowledge, improving both fairness and model utility.
Experimental results on three real-world datasets demonstrate the effectiveness of FairDTD in balancing fairness and model utility.

\section*{CRediT authorship contribution statement}
\textbf{Chengyu Li}: Writing - original draft, Software, Methodology, Conceptualization, Visualization. \textbf{Debo Cheng}: Conceptualization, Writing - review \& editing, Formal analysis. \textbf{Guixian Zhang}: Conceptualization, Writing - review \& editing, Formal analysis. \textbf{Yi Li}: Writing - review \& editing, Formal analysis. \textbf{Shichao Zhang}: Writing - review \& editing, Supervision, Funding acquisition.

\section*{Declaration of competing interest}
The authors declare that they have no known competing financial interests or personal relationships that could have appeared to influence the work reported in this paper.

\section*{Acknowledgements}
This work has been supported in part by the Project of Guangxi Science and Technology (GuiKeAB23026040), the Research Fund of Guangxi Key Lab of Multi-source Information Mining \& Security (Nos. 20-A-01-01 and MIMS21-M01), the Guangxi Collaborative Innovation Center of Multi-Source Information Integration and Intelligent Processing, the Research Fund of Guangxi Key Lab of Multi-source Information Mining \& Security (MIMS24-03), the Research Fund of Guangxi Key Lab of Multi-source Information Mining \& Security (MIMS24-13).

\printcredits



\bibliographystyle{unsrt}
\bibliography{cas-refs}

\begin{thebibliography}{10}

\bibitem{dai2024comprehensive}
Enyan Dai, Tianxiang Zhao, Huaisheng Zhu, Junjie Xu, Zhimeng Guo, Hui Liu, Jiliang Tang, and Suhang Wang.
\newblock A comprehensive survey on trustworthy graph neural networks: Privacy, robustness, fairness, and explainability.
\newblock {\em Machine Intelligence Research}, pages 1--51, 2024.

\bibitem{ma2024cross}
Ang Ma, Yanhua Yu, Chuan Shi, Zirui Guo, and Tat-Seng Chua.
\newblock Cross-view hypergraph contrastive learning for attribute-aware recommendation.
\newblock {\em Information Processing \& Management}, 61(4):103701, 2024.

\bibitem{zou2025graph}
Cuifang Zou, Guangquan Lu, Longqing Du, Xuxia Zeng, and Shilong Lin.
\newblock Graph similarity learning for cross-level interactions.
\newblock {\em Information Processing \& Management}, 62(1):103932, 2025.

\bibitem{li2024contrastive}
Chengyu Li, Debo Cheng, Guixian Zhang, and Shichao Zhang.
\newblock Contrastive learning for fair graph representations via counterfactual graph augmentation.
\newblock {\em Knowledge-Based Systems}, page 112635, 2024.

\bibitem{li2022devil}
Zhixun Li, Dingshuo Chen, Qiang Liu, and Shu Wu.
\newblock The devil is in the conflict: Disentangled information graph neural networks for fraud detection.
\newblock In {\em 2022 IEEE International Conference on Data Mining (ICDM)}, pages 1059--1064. IEEE, 2022.

\bibitem{yeh2009comparisons}
I-Cheng Yeh and Che-hui Lien.
\newblock The comparisons of data mining techniques for the predictive accuracy of probability of default of credit card clients.
\newblock {\em Expert systems with applications}, 36(2):2473--2480, 2009.

\bibitem{mehrabi2021survey}
Ninareh Mehrabi, Fred Morstatter, Nripsuta Saxena, Kristina Lerman, and Aram Galstyan.
\newblock A survey on bias and fairness in machine learning.
\newblock {\em ACM computing surveys (CSUR)}, 54(6):1--35, 2021.

\bibitem{zhang2024disentangled}
Guixian Zhang, Guan Yuan, Debo Cheng, Lin Liu, Jiuyong Li, and Shichao Zhang.
\newblock Disentangled contrastive learning for fair graph representations.
\newblock {\em Neural Networks}, page 106781, 2024.

\bibitem{li2024quantum}
Jiaye Li, Yangding Li, Jiagang Song, Jian Zhang, and Shichao Zhang.
\newblock Quantum support vector machine for classifying noisy data.
\newblock {\em IEEE Transactions on Computers}, 73(9):2233--2247, 2024.

\bibitem{wang2022improving}
Yu~Wang, Yuying Zhao, Yushun Dong, Huiyuan Chen, Jundong Li, and Tyler Derr.
\newblock Improving fairness in graph neural networks via mitigating sensitive attribute leakage.
\newblock In {\em Proceedings of the 28th ACM SIGKDD conference on knowledge discovery and data mining}, pages 1938--1948, 2022.

\bibitem{la2010randomization}
Timothy La~Fond and Jennifer Neville.
\newblock Randomization tests for distinguishing social influence and homophily effects.
\newblock In {\em Proceedings of the 19th international conference on World wide web}, pages 601--610, 2010.

\bibitem{current2022fairegm}
Sean Current, Yuntian He, Saket Gurukar, and Srinivasan Parthasarathy.
\newblock Fairegm: fair link prediction and recommendation via emulated graph modification.
\newblock In {\em Proceedings of the 2nd ACM Conference on Equity and Access in Algorithms, Mechanisms, and Optimization}, pages 1--14, 2022.

\bibitem{li2021dyadic}
Peizhao Li, Yifei Wang, Han Zhao, Pengyu Hong, and Hongfu Liu.
\newblock On dyadic fairness: Exploring and mitigating bias in graph connections.
\newblock In {\em International Conference on Learning Representations}, page~18, 2021.

\bibitem{kose2024fairgat}
O~Deniz Kose and Yanning Shen.
\newblock Fairgat: Fairness-aware graph attention networks.
\newblock {\em ACM Transactions on Knowledge Discovery from Data}, 18(7):1--20, 2024.

\bibitem{dai2021say}
Enyan Dai and Suhang Wang.
\newblock Say no to the discrimination: Learning fair graph neural networks with limited sensitive attribute information.
\newblock In {\em Proceedings of the 14th ACM International Conference on Web Search and Data Mining}, pages 680--688, 2021.

\bibitem{zhang2022hyper}
Shichao Zhang, Jiaye Li, Wenzhen Zhang, and Yongsong Qin.
\newblock Hyper-class representation of data.
\newblock {\em Neurocomputing}, 503:200--218, 2022.

\bibitem{zhang2003data}
Shichao Zhang, Chengqi Zhang, and Qiang Yang.
\newblock Data preparation for data mining.
\newblock {\em Applied artificial intelligence}, 17(5-6):375--381, 2003.

\bibitem{bai2021understanding}
Yingbin Bai, Erkun Yang, Bo~Han, Yanhua Yang, Jiatong Li, Yinian Mao, Gang Niu, and Tongliang Liu.
\newblock Understanding and improving early stopping for learning with noisy labels.
\newblock {\em Advances in Neural Information Processing Systems}, 34:24392--24403, 2021.

\bibitem{zhang2021challenges}
Shichao Zhang.
\newblock Challenges in knn classification.
\newblock {\em IEEE Transactions on Knowledge and Data Engineering}, 34(10):4663--4675, 2021.

\bibitem{zhu2024devil}
Yuchang Zhu, Jintang Li, Liang Chen, and Zibin Zheng.
\newblock The devil is in the data: Learning fair graph neural networks via partial knowledge distillation.
\newblock In {\em Proceedings of the 17th ACM International Conference on Web Search and Data Mining}, pages 1012--1021, 2024.

\bibitem{luo2024fugnn}
Renqiang Luo, Huafei Huang, Shuo Yu, Zhuoyang Han, Estrid He, Xiuzhen Zhang, and Feng Xia.
\newblock Fugnn: Harmonizing fairness and utility in graph neural networks.
\newblock In {\em Proceedings of the 30th ACM SIGKDD Conference on Knowledge Discovery and Data Mining}, pages 2072--2081, 2024.

\bibitem{zhu2024fair}
Yuchang Zhu, Jintang Li, Zibin Zheng, and Liang Chen.
\newblock Fair graph representation learning via sensitive attribute disentanglement.
\newblock In {\em Proceedings of the ACM on Web Conference 2024}, pages 1182--1192, 2024.

\bibitem{hinton2015distilling}
Geoffrey Hinton, O~Vinyals, and J~Dean.
\newblock Distilling the knowledge in a neural network.
\newblock {\em arXiv preprint arXiv:1503.02531}, 2015.

\bibitem{wei2024dynamic}
Yukang Wei and Yu~Bai.
\newblock Dynamic temperature knowledge distillation.
\newblock {\em arXiv preprint arXiv:2404.12711}, 2024.

\bibitem{li2023curriculum}
Zheng Li, Xiang Li, Lingfeng Yang, Borui Zhao, Renjie Song, Lei Luo, Jun Li, and Jian Yang.
\newblock Curriculum temperature for knowledge distillation.
\newblock In {\em Proceedings of the AAAI Conference on Artificial Intelligence}, pages 1504--1512, 2023.

\bibitem{zhang2024bayesian}
Guixian Zhang, Shichao Zhang, and Guan Yuan.
\newblock Bayesian graph local extrema convolution with long-tail strategy for misinformation detection.
\newblock {\em ACM Transactions on Knowledge Discovery from Data}, 18(4):1--21, 2024.

\bibitem{deng2023tts}
Jiewen Deng, Jinliang Deng, Du~Yin, Renhe Jiang, and Xuan Song.
\newblock Tts-norm: Forecasting tensor time series via multi-way normalization.
\newblock {\em ACM Transactions on Knowledge Discovery from Data}, 18(1):1--25, 2023.

\bibitem{zhang2024mitigating}
Guixian Zhang, Guan Yuan, Debo Cheng, Lin Liu, Jiuyong Li, and Shichao Zhang.
\newblock Mitigating propensity bias of large language models for recommender systems.
\newblock {\em arXiv preprint arXiv:2409.20052}, 2024.

\bibitem{wang2024heterogeneous}
Shiping Wang, Sujia Huang, Zhihao Wu, Rui Liu, Yong Chen, and Dell Zhang.
\newblock Heterogeneous graph convolutional network for multi-view semi-supervised classification.
\newblock {\em Neural Networks}, 178:106438, 2024.

\bibitem{sui2022causal}
Yongduo Sui, Xiang Wang, Jiancan Wu, Min Lin, Xiangnan He, and Tat-Seng Chua.
\newblock Causal attention for interpretable and generalizable graph classification.
\newblock In {\em Proceedings of the 28th ACM SIGKDD Conference on Knowledge Discovery and Data Mining}, pages 1696--1705, 2022.

\bibitem{baghershahi2023self}
Peyman Baghershahi, Reshad Hosseini, and Hadi Moradi.
\newblock Self-attention presents low-dimensional knowledge graph embeddings for link prediction.
\newblock {\em Knowledge-Based Systems}, 260:110124, 2023.

\bibitem{kipf2016semi}
Thomas~N Kipf and Max Welling.
\newblock Semi-supervised classification with graph convolutional networks.
\newblock {\em arXiv preprint arXiv:1609.02907}, 2016.

\bibitem{xu2018powerful}
Keyulu Xu, Weihua Hu, Jure Leskovec, and Stefanie Jegelka.
\newblock How powerful are graph neural networks?
\newblock {\em arXiv preprint arXiv:1810.00826}, 2018.

\bibitem{fu2025himul}
Changzeng Fu, Fengkui Qian, Kaifeng Su, Yikai Su, Ze~Wang, Jiaqi Shi, Zhigang Liu, Chaoran Liu, and Carlos~Toshinori Ishi.
\newblock Himul-lgg: A hierarchical decision fusion-based local--global graph neural network for multimodal emotion recognition in conversation.
\newblock {\em Neural Networks}, 181:106764, 2025.

\bibitem{yuan2022explainability}
Hao Yuan, Haiyang Yu, Shurui Gui, and Shuiwang Ji.
\newblock Explainability in graph neural networks: A taxonomic survey.
\newblock {\em IEEE transactions on pattern analysis and machine intelligence}, 45(5):5782--5799, 2022.

\bibitem{li2024homogeneous}
Duantengchuan Li, Yuxuan Gao, Zhihao Wang, Hua Qiu, Pan Liu, Zhuoran Xiong, and Zilong Zhang.
\newblock Homogeneous graph neural networks for third-party library recommendation.
\newblock {\em Information Processing \& Management}, 61(6):103831, 2024.

\bibitem{zhao2023mgfm}
Qihui Zhao, Tianhan Gao, and Nan Guo.
\newblock La-mgfm: A legal judgment prediction method via sememe-enhanced graph neural networks and multi-graph fusion mechanism.
\newblock {\em Information Processing \& Management}, 60(5):103455, 2023.

\bibitem{li2024multi}
Zihao Li, Yakun Chen, Xianzhi Wang, Lina Yao, and Guandong Xu.
\newblock Multi-view gcn for loan default risk prediction.
\newblock {\em Neural Computing and Applications}, pages 1--14, 2024.

\bibitem{tr2021synchronization}
Ganesh~Babu TR, A~Chandrasekar, Yang Cao, et~al.
\newblock Synchronization of markovian jump neural networks for sampled data control systems with additive delay components: Analysis of image encryption technique.
\newblock {\em Authorea Preprints}, 2021.

\bibitem{cao2024input}
Yang Cao, A~Chandrasekar, T~Radhika, and Velusamy Vijayakumar.
\newblock Input-to-state stability of stochastic markovian jump genetic regulatory networks.
\newblock {\em Mathematics and Computers in Simulation}, 222:174--187, 2024.

\bibitem{kose2021fairness}
{\"O}yk{\"u}~Deniz K{\"o}se and Yanning Shen.
\newblock Fairness-aware node representation learning.
\newblock {\em arXiv preprint arXiv:2106.05391}, 2021.

\bibitem{chouldechova2018frontiers}
Alexandra Chouldechova and Aaron Roth.
\newblock The frontiers of fairness in machine learning.
\newblock {\em arXiv preprint arXiv:1810.08810}, 2018.

\bibitem{zhang2023fpgnn}
Guixian Zhang, Debo Cheng, and Shichao Zhang.
\newblock Fpgnn: Fair path graph neural network for mitigating discrimination.
\newblock {\em World Wide Web}, 26(5):3119--3136, 2023.

\bibitem{agarwal2021towards}
Chirag Agarwal, Himabindu Lakkaraju, and Marinka Zitnik.
\newblock Towards a unified framework for fair and stable graph representation learning.
\newblock In {\em Uncertainty in Artificial Intelligence}, pages 2114--2124. PMLR, 2021.

\bibitem{dong2021individual}
Yushun Dong, Jian Kang, Hanghang Tong, and Jundong Li.
\newblock Individual fairness for graph neural networks: A ranking based approach.
\newblock In {\em Proceedings of the 27th ACM SIGKDD Conference on Knowledge Discovery \& Data Mining}, pages 300--310, 2021.

\bibitem{dong2022edits}
Yushun Dong, Ninghao Liu, Brian Jalaian, and Jundong Li.
\newblock Edits: Modeling and mitigating data bias for graph neural networks.
\newblock In {\em Proceedings of the ACM web conference 2022}, pages 1259--1269, 2022.

\bibitem{spinelli2021fairdrop}
Indro Spinelli, Simone Scardapane, Amir Hussain, and Aurelio Uncini.
\newblock Fairdrop: Biased edge dropout for enhancing fairness in graph representation learning.
\newblock {\em IEEE Transactions on Artificial Intelligence}, 3(3):344--354, 2021.

\bibitem{zhang2024learning}
Guixian Zhang, Debo Cheng, Guan Yuan, and Shichao Zhang.
\newblock Learning fair representations via rebalancing graph structure.
\newblock {\em Information Processing \& Management}, 61(1):103570, 2024.

\bibitem{yang2024fairsin}
Cheng Yang, Jixi Liu, Yunhe Yan, and Chuan Shi.
\newblock Fairsin: Achieving fairness in graph neural networks through sensitive information neutralization.
\newblock In {\em Proceedings of the AAAI Conference on Artificial Intelligence}, pages 9241--9249, 2024.

\bibitem{zhu2024one}
Yuchang Zhu, Jintang Li, Yatao Bian, Zibin Zheng, and Liang Chen.
\newblock One fits all: Learning fair graph neural networks for various sensitive attributes.
\newblock In {\em Proceedings of the 30th ACM SIGKDD Conference on Knowledge Discovery and Data Mining}, pages 4688--4699, 2024.

\bibitem{wang2025fg}
Zichong Wang, Zhipeng Yin, Yuying Zhang, Liping Yang, Tingting Zhang, Niki Pissinou, Yu~Cai, Shu Hu, Yun Li, Liang Zhao, et~al.
\newblock Fg-smote: Towards fair node classification with graph neural network.
\newblock {\em ACM SIGKDD Explorations Newsletter}, 26(2):99--108, 2025.

\bibitem{wang2025towards}
Zichong Wang, Nhat Hoang, Xingyu Zhang, Kevin Bello, Xiangliang Zhang, Sundararaja~Sitharama Iyengar, and Wenbin Zhang.
\newblock Towards fair graph learning without demographic information.
\newblock In {\em The 28th International Conference on Artificial Intelligence and Statistics}, 2025.

\bibitem{zhang2024trustworthy}
He~Zhang, Bang Wu, Xingliang Yuan, Shirui Pan, Hanghang Tong, and Jian Pei.
\newblock Trustworthy graph neural networks: Aspects, methods, and trends.
\newblock {\em Proceedings of the IEEE}, 2024.

\bibitem{xia2020part}
Xiaobo Xia, Tongliang Liu, Bo~Han, Nannan Wang, Mingming Gong, Haifeng Liu, Gang Niu, Dacheng Tao, and Masashi Sugiyama.
\newblock Part-dependent label noise: Towards instance-dependent label noise.
\newblock {\em Advances in Neural Information Processing Systems}, 33:7597--7610, 2020.

\bibitem{lin2024learning}
Yexiong Lin, Yu~Yao, and Tongliang Liu.
\newblock Learning the latent causal structure for modeling label noise.
\newblock {\em Advances in Neural Information Processing Systems}, 37:120549--120577, 2024.

\bibitem{fan2022debiasing}
Shaohua Fan, Xiao Wang, Yanhu Mo, Chuan Shi, and Jian Tang.
\newblock Debiasing graph neural networks via learning disentangled causal substructure.
\newblock {\em Advances in Neural Information Processing Systems}, 35:24934--24946, 2022.

\bibitem{pearl2009causality}
Judea Pearl.
\newblock {\em Causality}.
\newblock Cambridge university press, 2009.

\bibitem{ling2024fair}
Zhaolong Ling, Enqi Xu, Peng Zhou, Liang Du, Kui Yu, and Xindong Wu.
\newblock Fair feature selection: A causal perspective.
\newblock {\em ACM Transactions on Knowledge Discovery from Data}, 2024.

\bibitem{cheng2024data}
Debo Cheng, Jiuyong Li, Lin Liu, Jixue Liu, and Thuc~Duy Le.
\newblock Data-driven causal effect estimation based on graphical causal modelling: A survey.
\newblock {\em ACM Computing Surveys}, 56(5):1--37, 2024.

\bibitem{yao2021instance}
Yu~Yao, Tongliang Liu, Mingming Gong, Bo~Han, Gang Niu, and Kun Zhang.
\newblock Instance-dependent label-noise learning under a structural causal model.
\newblock {\em Advances in Neural Information Processing Systems}, 34:4409--4420, 2021.

\bibitem{zhang2024path}
Dacao Zhang, Kun Zhang, Le~Wu, Mi~Tian, Richang Hong, and Meng Wang.
\newblock Path-specific causal reasoning for fairness-aware cognitive diagnosis.
\newblock In {\em Proceedings of the 30th ACM SIGKDD Conference on Knowledge Discovery and Data Mining}, pages 4143--4154, 2024.

\bibitem{bengio2013representation}
Yoshua Bengio, Aaron Courville, and Pascal Vincent.
\newblock Representation learning: A review and new perspectives.
\newblock {\em IEEE transactions on pattern analysis and machine intelligence}, 35(8):1798--1828, 2013.

\bibitem{zhang2020self}
Zhilu Zhang and Mert Sabuncu.
\newblock Self-distillation as instance-specific label smoothing.
\newblock {\em Advances in Neural Information Processing Systems}, 33:2184--2195, 2020.

\bibitem{guo2023boosting}
Zhichun Guo, Chunhui Zhang, Yujie Fan, Yijun Tian, Chuxu Zhang, and Nitesh~V Chawla.
\newblock Boosting graph neural networks via adaptive knowledge distillation.
\newblock In {\em Proceedings of the AAAI Conference on Artificial Intelligence}, pages 7793--7801, 2023.

\bibitem{takac2012data}
Lubos Takac and Michal Zabovsky.
\newblock Data analysis in public social networks.
\newblock In {\em International scientific conference and international workshop present day trends of innovations}, volume~1, pages 1--6, 2012.

\bibitem{guo2023towards}
Zhimeng Guo, Jialiang Li, Teng Xiao, Yao Ma, and Suhang Wang.
\newblock Towards fair graph neural networks via graph counterfactual.
\newblock In {\em Proceedings of the 32nd ACM International Conference on Information and Knowledge Management}, pages 669--678, 2023.

\bibitem{luo2025fairgp}
Renqiang Luo, Huafei Huang, Ivan Lee, Chengpei Xu, Jianzhong Qi, and Feng Xia.
\newblock Fairgp: A scalable and fair graph transformer using graph partitioning.
\newblock In {\em Proceedings of the AAAI Conference on Artificial Intelligence}, pages 12319--12327, 2025.

\bibitem{dwork2012fairness}
Cynthia Dwork, Moritz Hardt, Toniann Pitassi, Omer Reingold, and Richard Zemel.
\newblock Fairness through awareness.
\newblock In {\em Proceedings of the 3rd innovations in theoretical computer science conference}, pages 214--226, 2012.

\bibitem{hardt2016equality}
Moritz Hardt, Eric Price, and Nati Srebro.
\newblock Equality of opportunity in supervised learning.
\newblock {\em Advances in neural information processing systems}, 29, 2016.

\bibitem{li2024rethinking}
Zhixun Li, Yushun Dong, Qiang Liu, and Jeffrey~Xu Yu.
\newblock Rethinking fair graph neural networks from re-balancing.
\newblock In {\em Proceedings of the 30th ACM SIGKDD Conference on Knowledge Discovery and Data Mining}, pages 1736--1745, 2024.

\end{thebibliography}

\end{document}